\let\oldcite=\cite
\renewcommand\cite[1]{\ifthenelse{\equal{#1}{NEEDED}}{[citation~needed]}{\oldcite{#1}}}
\theoremstyle{definition}
\newtheorem{lemma}{Lemma}
\newtheorem{proposition}{Proposition}
\def\newblock{\ }%
\newacronym{MILP}{MILP}{mixed integer linear programming}
\newacronym{MIP}{MIP}{mixed integer program}
\newacronym{LP}{LP}{linear program}
\newacronym{SOCP}{SOCP}{second-order cone program}
\newacronym{BFS}{BFS}{Bilevel Feature Selection}
\newacronym{BFS-CV}{BFS-CV}{Bilevel Feature Selection with cross-validation}
\newacronym{ERM}{ERM}{Empirical Risk Minimization}
\newacronym{SAA}{SAA}{sample average approximation}
\newacronym{SVR}{SVR}{support vector regression}
\newacronym{NLP}{NLP}{nonlinear program}
\newacronym{KKT}{KKT}{Karush-Kuhn-Tucker}
\newacronym{BHO}{BHO}{Bilevel Hyperparameter Optimization}
\newacronym{SQP}{SQP}{Sequential Quadratic Programming}
\begin{document}

\linespread{1.6}\selectfont

\begin{center}
	
\vspace*{-0.5cm}

\begin{huge}
	Bilevel Optimization for Feature Selection in the Data-Driven Newsvendor Problem
\end{huge}

\vspace*{0.6cm}

{\setstretch{1.0}

\textbf{Breno Serrano}$^*$\\
{\footnotesize School of Management, Technical University of Munich, Germany, breno.serrano@tum.de}\\[10pt]

\textbf{Stefan Minner}\\
{\footnotesize School of Management and Munich Data Science Institute, Technical University of Munich, Germany, stefan.minner@tum.de}\\[10pt]

\textbf{Maximilian Schiffer}\\
{\footnotesize School of Management and Munich Data Science Institute, Technical University of Munich, Germany, schiffer@tum.de}\\[10pt]

\textbf{Thibaut Vidal}\\[2pt]
{\footnotesize Department of Mathematical and Industrial Engineering, Polytechnique Montréal, Canada,\\
Department of Computer Science, Pontifical Catholic University of Rio de Janeiro, Brazil, thibaut.vidal@polymtl.ca}\\
}
\end{center}

\noindent
\textbf{Abstract.}
We study the feature-based newsvendor problem, in which a decision-maker has access to historical data consisting of demand observations and exogenous features. In this setting, we investigate feature selection, aiming to derive sparse, explainable models with improved out-of-sample performance. Up to now, state-of-the-art methods utilize regularization, which penalizes the number of selected features or the norm of the solution vector. As an alternative, we introduce a novel bilevel programming formulation. The upper-level problem selects a subset of features that minimizes an estimate of the out-of-sample cost of ordering decisions based on a held-out validation set. The lower-level problem learns the optimal coefficients of the decision function on a training set, using only the features selected by the upper-level. We present a mixed integer linear program reformulation for the bilevel program, which can be solved to optimality with standard optimization solvers. Our computational experiments show that the method accurately recovers ground-truth features already for instances with a sample size of a few hundred observations. In contrast, regularization-based techniques often fail at feature recovery or require thousands of observations to obtain similar accuracy. Regarding out-of-sample generalization, we achieve improved or comparable cost performance.

\vspace*{0.2cm}

\noindent
\textbf{Keywords.} Feature Selection; Bilevel Optimization; Newsvendor; Mixed Integer Programming. \\ 
\vspace*{-0.5cm}

\noindent
$^*$ Corresponding author

\noindent
Declarations of interest: none

\linespread{2.0}\selectfont
\setlength{\abovedisplayskip}{3pt}
\setlength{\belowdisplayskip}{3pt}

\section{Introduction}


The newsvendor problem and its variants have served as fundamental building blocks for models in inventory and supply chain management. In the classical newsvendor problem, a decision-maker optimizes the inventory of a perishable product that has a stochastic demand with a known distribution. However, having complete knowledge of the demand distribution is a strong assumption that does not hold in practice: often, the only information available is a limited set of historical data. Against this background, data-driven approaches became popular and strive to use past demand data to inform the newsvendor's ordering decisions.


In this context, we study the feature-based newsvendor problem~(cf. \citealt{beutel2012safety,ban2019big}) in which the decision-maker has access not only to historical demand observations but also to a set of feature variables---often referred to as contextual information or covariates---that may provide partial information about future realizations of the uncertain demand. 
Companies nowadays have large amounts of data that are used to train machine learning models with the aim of improving operational decisions. In practice, such models often suffer from overfitting to the training data, or lack explainability, which is crucial, e.g., when dealing with high-stakes decisions. In this setting, selecting a subset of the available features can lead to sparser, more explainable models with improved out-of-sample performance. Against this background, we investigate the challenge of feature selection~(cf. \citealt{1183917,kuhn2019feature}): given a data set with a possibly large set of feature variables, we aim to learn a linear decision function for the feature-based newsvendor that can generalize to out-of-sample data, utilizing only relevant features.


The goal of this paper is to propose an approach to feature selection based on bilevel optimization. We search for a subset of features that leads to a minimal out-of-sample cost measured on a held-out data set, when used for training a linear decision function for the feature-based newsvendor. In the remainder of this section, we first review related literature before we detail our contribution and describe the organization of this paper.

\subsection{Related Works}\label{section-literature}

Our work relates to the fields of data-driven optimization for the newsvendor problem, and more broadly to prescriptive analytics, machine learning, and bilevel programming. We briefly review the most related papers in the following.

\vspace*{0.5cm}


\noindent\textbf{Newsvendor problem.} Research on the newsvendor problem often assumed a decision-maker with full knowledge about the demand distribution, and considered various settings, e.g., with different objectives or utility functions~\citep{chen2007risk,wang2009loss}, pricing policies~\citep{petruzzi1999pricing}, and multi-product or multi-period settings~\citep{lau1996newsstand,kogan2003multi}. For general surveys on newsvendor models and extensions, we refer the interested reader to~\citet{khouja1999single}, \citet{QIN2011361} and \citet{choi2012handbook}. In practice, the decision-maker often has only a finite set of demand observations and cannot estimate the true underlying distribution, which motivated works on the distribution-free newsvendor problem. In this context, the seminal work of \mbox{\citet{scarf1958min}} derived the optimal order quantity that maximizes profit against the worst-case demand distribution, assuming that only the mean and variance of demand are known. For a review on the distribution-free newsvendor and extensions thereof, we refer to \citet{gallego1993distribution,moon1994distribution}, and \citet{yue2006expected}. Later works on this problem variant assumed additional information about the demand distribution, such as percentiles~\citep{gallego2001minimax}, symmetry, and unimodality~\citep{perakis2008regret}.


In contrast to working with moments or distributional parameters, data-driven approaches build directly upon a sample of available data that reflects realizations of the underlying uncertainty. In this context, a common solution approach is \gls{SAA}~(cf.~\citealt{kleywegt2002sample,SHAPIRO2003353}). \citet{levi2007provably} applied \gls{SAA} for the single-period featureless newsvendor problem and established upper bounds on the number of samples required to achieve a specified relative error. In this course, \citet{levi2015data}, \citet{cheung2019sampling}, and \citet{besbes2021big} further improved upon previous \gls{SAA} bounds. \citet{besbes2013implications}, \cite{SACHS201428}, and \citet{ban2020confidence} studied the impact of demand censoring, i.e., a problem variant in which only \textit{sales} observations are available but excess demand is not recorded. They derived upper and lower bounds on the difference between the cost achieved by a policy and the optimal cost with knowledge of the demand distribution. Adopting a robust optimization perspective, \citet{bertsimas2005data} proposed a data-driven approach that can be reformulated as a \gls{LP} and trades off higher profits for a decrease in the downside risk. Robust optimization approaches were also investigated by \citet{bertsimas2006robust} and \citet{see2010robust} for a multi-period inventory problem. Finally, many authors applied data-driven distributionally robust approaches for dealing with uncertainty in the context of multi-item newsvendor problems~(see, e.g., \citealt{ben2013robust,hanasusanto2015distributionally,wang2016likelihood}, and \citealt{bertsimas2018robust}).

Despite numerous extensions to the newsvendor problem, most data-driven approaches consider only demand data but no feature variables to be available. However, ignoring the presence of features can lead to inconsistent decisions as shown in~\citet{ban2019big}. In the following, we review papers that also consider the presence of features in the context of data-driven optimization.


\vspace*{0.5cm}

\noindent\textbf{Data-driven optimization.} Beyond the newsvendor problem, some recent works have studied the integration of estimation and optimization. In particular, \citet{bertsimas2020predictive} proposed a framework for feature-based stochastic optimization problems based on a weighted \gls{SAA} approach, in which the weights are generated by machine learning methods, such as $k$-nearest neighbors regression, local linear regression, classification and regression trees, or random forests. \citet{elmachtoub2021smart} focused on problems with a linear objective and used features to learn a prediction model for the stochastic cost vector. They proposed a modified loss function that directly leverages the structure of the optimization problem instead of minimizing a standard prediction error, such as the least squares loss. Despite this modification, their approach still handles prediction and optimization as separate tasks and does not integrate them into a one-step process. \citet{mandi2020smart} further adapted the approach from \citet{elmachtoub2021smart} to solve some hard combinatorial problems, e.g., by proposing tailored warm-starting techniques.


In the context of the feature-based newsvendor, \citet{beutel2012safety} integrated estimation and optimization by learning a decision function that predicts ordering decisions directly from features, opposed to first estimating the demand and then optimizing the inventory level. The proposed model formulation is an \gls{LP} that solves an \gls{ERM} problem over a training data set. \citet{oroojlooyjadid2020applying} and \citet{zhang2017assessing} applied neural networks to the newsvendor problem, proposing specific loss functions that consider the impact of inventory shortage and holding costs. \citet{huber2019data} provided an empirical evaluation of different data-driven approaches for the feature-based newsvendor and compared their performance against \textit{model-based} approaches, which model the uncertainty through a demand distribution assumption. Their experiments on real-world data showed that data-driven approaches outperform their model-based counterparts in most cases.

Regarding feature selection, \citet{ban2019big} extended the model of~\citet{beutel2012safety} by including a regularization term to the objective function, which penalizes the complexity of the solution, thereby favoring the selection of fewer features. However, feature selection is not the main focus of~\citet{ban2019big}, and an open challenge remains regarding the specification of the regularization parameter, for which heuristics are often employed. In this work, we avoid regularization by formalizing the task of feature selection as a bilevel optimization problem for which we provide a tractable single-level reformulation.


\vspace*{0.5cm}

\noindent\textbf{Bilevel optimization in machine learning.} Bilevel optimization has been applied in the field of machine learning for hyperparameter optimization~\citep{bennett2006model, bennett2008bilevel,franceschi2018bilevel,mackay2018selftuning} and feature selection~\citep{agor2019feature}. In particular, \citet{bennett2006model,bennett2008bilevel} proposed a bilevel program for optimizing the hyperparameters of a \glsxtrlong{SVR} model. They reformulated the model into a single-level \glsxtrlong{NLP} and employed off-the-shelf solvers based on \glsxtrlong{SQP}~\citep{fletcher2002nonlinear}. \citet{franceschi2018bilevel} also proposed a bilevel programming approach for hyperparameter optimization, highlighting connections to meta-learning, and solved it with a gradient-based method.

Only~\citet{agor2019feature} addressed feature selection as a bilevel optimization problem in the context of classification models, e.g., Lasso-based logistic regression and support vector machines. However, their bilevel formulations do not apply to our problem setting, since the feature-based newsvendor combines aspects from supervised learning, i.e., regression, and data-driven optimization. Moreover, the solution method of \citet{agor2019feature} consists of a tailored genetic algorithm, which does not provide solution-quality guarantees. In contrast, our methodology is based on \gls{MILP} and allows to optimally solve the proposed bilevel programming formulations.






\subsection{Contribution}

We close the research gaps outlined above by proposing a novel bilevel optimization model that directly incorporates feature selection into solving the data-driven newsvendor problem. 
Specifically, our contribution is fourfold.
First, we introduce a bilevel program designed for feature selection, which we denote the \gls{BFS} model.
In contrast to regularization-based methods, which penalize the norm of the solution vector, \gls{BFS} captures the more intuitive notion of selecting a subset of features that minimizes an estimate of the out-of-sample cost, measured on a held-out validation set.
We reformulate the bilevel program into a single-level optimization problem, which we solve to optimality with off-the-shelf optimization solvers. 
Second, we extend the \gls{BFS} model to accommodate cross-validation strategies, which further improves its solution quality. 
Third, to illustrate the drawback of regularization-based methods for feature selection, we present a bilevel program, which we refer to as \gls{BHO}, that searches for the optimal hyperparameter for the regularized \gls{ERM} model (cf. \citealt{ban2019big}). \gls{BHO} formally describes the optimization model that established hyperparameter optimization methods implicitly solve by means of heuristics, such as grid search, random search, or Bayesian optimization.
%
%
%
Fourth, we conduct extensive numerical experiments, using synthetic instances with correlated features. We compare the proposed \gls{BFS} models against regularization-based methods in terms of out-of-sample performance and ground-truth feature recovery. We further compare the methods' behavior under demand misspecification, assuming a nonlinear demand model. Our results show that the proposed \gls{BFS} approach consistently achieves higher accuracy in feature recovery. In most cases, we also observe an improvement in out-of-sample cost performance, i.e., a decrease in test cost.


\subsection{Organization}

The remainder of this paper is structured as follows. In Section \ref{section-background}, we review the model formulations for the classical newsvendor and the feature-based newsvendor problem. Section \ref{section-methodology} presents the \gls{BHO} and the \gls{BFS} models, and consecutively extends the \gls{BFS} to cross-validation. Section \ref{section-experimental-design} describes our experimental design, and Section~\ref{section-experiments} presents the results comparing the proposed method against state-of-the-art techniques based on regularization. Section \ref{section-conclusion} concludes this paper and gives an outlook on future research.


\section{Fundamentals}\label{section-background}



In the classical newsvendor problem, a risk-neutral decision-maker sets the order quantity of a product before observing its uncertain demand. Here, the objective is to minimize the expected cost:
\begin{equation}
\min_{q \geq 0} \, \mathbb{E}\left[C(q; d)\right],
\end{equation}
where $q$ is the order quantity, $d \sim \mathcal{D}$ is the random variable representing the uncertain demand,
\begin{equation}
C(q; d) := b(d - q)^+ + h(q - d)^+ 
\end{equation}
is the cost of ordering~$q$ units and observing demand~$d$, based on the per unit shortage cost~$b$ for lost profits and unit holding cost $h$, corresponding to the procurement cost of unsold products discounted by their unit salvage value.
If the demand distribution is known, then the optimal decision~$q^*$ is given at the $b/(b+h)$ quantile of its cumulative distribution function.



In practice, the demand distribution is often not known. We consider the feature-based newsvendor problem, in which the decision-maker has access to historical demand data and contextual information given by a set of feature variables ${\mathbf{x} \in \mathbb{R}^{m+1}}$ (cf.~\citealt{beutel2012safety}). Here, the uncertain demand~$d$ and feature variables~$\mathbf{x}$ follow an (unknown) joint probability distribution $(\mathbf{x}, d) \sim \mathcal{X} \times \mathcal{D} = \mathcal{Z}$. The decision-maker's objective is to minimize the expected cost conditioned on the observed features:
\begin{equation}
\label{newsvendor-feat-pb}
\min_{q \geq 0} \, \mathbb{E}\left[C(q(\mathbf{x}); d(\mathbf{x}))| \mathbf{x}\right].
\end{equation}


One approach to solve the feature-based newsvendor is to separate the estimation and optimization problems, i.e., one first estimates the conditional demand distribution from historical data and then optimizes the order quantity based on new feature observations. One drawback of this approach is that the first step's estimation problem does not account for the asymmetry in the newsvendor cost function, related to under- and over-predicting demand. To address this issue, \citet{beutel2012safety} proposed to integrate estimation and optimization into a one-step process, by introducing a linear decision function that maps feature observations directly to ordering decisions. To learn the optimal coefficients of the decision function, one minimizes the empirical cost over a data set with demand and feature observations.


Let $\left\{(\mathbf{x}_i, d_i)\right\}_{i\in S}$ be a data set indexed by $S=\{1,\dots,n\}$, where $\mathbf{x}_i$ is an \mbox{$(m+1)$-dimensional} feature vector and $d_i$ is a scalar demand observation. Let $J = \{0, \dots, m\}$ denote the set of feature indices. 
We assume that $x^0_i = 1$ represents the feature-independent intercept term, for all $i \in S$.
In this setting, \citet{beutel2012safety} consider a linear decision function of the form:
\begin{equation}
\label{lindec-rule}
q(\mathbf{x}) = \beta^0 + \sum_{j=1}^{m} \beta^j x^j = \boldsymbol{\beta}^{\top} \mathbf{x},
\end{equation}
where $\boldsymbol{\beta} \in \mathbb{R}^{m+1}$ is the parameter vector, whose values are learned by minimizing the empirical cost on data set $S$. Upon observing new feature values, the decision-maker can then directly decide upon the order quantity instead of first estimating the uncertain demand.

Since the learned decision function may overfit to the in-sample data set $S$, it is common practice in machine learning to evaluate the out-of-sample generalization on a separate test data set~$S_{\textit{test}}$. 
To avoid overfitting and improve the out-of-sample generalization, \citet{ban2019big} proposed an extension of \citet{beutel2012safety} by integrating a regularization term into the loss function. Accordingly, the objective comprises a trade-off between minimizing the empirical in-sample cost and the regularization term, with a constant hyperparameter balancing these two terms:
\begin{flalign}
\qquad\text{(ERM-$\ell_p$)} 
\qquad\qquad && \min_{\boldsymbol{\beta}} \quad & \dfrac{1}{|S|} \sum_{i \in S} C( q_i \, ; d_i) + \lambda || \boldsymbol{\beta} ||_p \\
&& \text{s.t.} \quad & q_i = \boldsymbol{\beta}^{\top} \mathbf{x}_i & \forall i \in S, \qquad\qquad\qquad
\end{flalign}
where $\lambda \geq 0$ is the regularization hyperparameter and $|| \boldsymbol{\beta} ||_p$ is the $\ell_p$-norm of the vector $\boldsymbol{\beta}$. 
Depending on the choice of $p$ in the regularization, the resulting model may be a \gls{MILP}, an \gls{LP}, or a \glsxtrlong{SOCP}, for $\ell_0$, $\ell_1$, and $\ell_2$-norm regularization, respectively. Effectively, regularization enables feature selection by penalizing the complexity of the solution, thereby favoring sparse solution vectors.

\section{Methodology}\label{section-methodology}


We start this section presenting the \gls{BHO} model, which incorporates hyperparameter fitting in \text{(ERM-$\ell_p$)}. Then, we introduce the \gls{BFS} model as an alternative bilevel program that avoids regularization. 



\subsection{Bilevel Hyperparameter Optimization (BHO)}
\label{subsection-BHO}



In Section 2, we assumed the hyperparameter $\lambda$ as introduced in~\text{(ERM-$\ell_p$)} to be given. However, identifying $\lambda$ constitutes a challenge in itself as a respective misspecification can significantly reduce cost performance. To parametrize $\lambda$ correctly, one may utilize existing techniques for hyperparameter optimization, which partition the original data set~$S$ into a training set~$T$ and a validation set~$V$. On the training set, one learns the model parameters for a fixed hyperparameter value. Using the validation set, one can then assess the cost of the trained model for a variety of hyperparameter values, to finally choose the value~$\lambda^*$ that leads to a minimum cost on the validation set. Next, we present the \gls{BHO} formulation, which models the search for the optimal hyperparameter~$\lambda^*$ as a bilevel optimization problem.

We introduce variables~$u_i$ to model the inventory shortage and variables~$o_i$ to model the surplus inventory at the end of period $i \in T \cup V$. In the following bilevel programming formulation, the upper-level (UL) problem searches for an optimal regularization value $\lambda^* \geq 0$ that minimizes cost on the validation set $V$. In turn, the lower-level (LL) problem solves the feature-based newsvendor, as stated in \text{(ERM-$\ell_p$)}, on the training set $T$: 
\begin{flalign}
\qquad\text{(BHO-$\ell_p$ UL)} && C^*_{\text{BHO}} \,\, = \,\, \min_{\lambda \geq 0} \quad & \dfrac{1}{|V|} \sum_{i \in V} \left( b u_i + h o_i \right) \label{bho-start} \\
&&\text{s.t.} \quad & u_i \geq d_i - \boldsymbol{\beta}^{\top} \mathbf{x}_i &&\forall i \in V \label{bho-ul-u-constr}\\
&&& o_i \geq \boldsymbol{\beta}^{\top} \mathbf{x}_i - d_i &&\forall i \in V \label{bho-ul-o-constr}\\
&&& u_i \geq 0, \, o_i \geq 0 &&\forall i \in V \qquad\qquad \label{bho-ul-vars-uo} \\
&&& \boldsymbol{\beta} \in \Omega_p (\lambda), \label{bho-ul-beta-constr} 
\end{flalign}
where $\Omega_p (\lambda)$ is the set of optimal solutions $\boldsymbol{\beta}$ to the lower-level problem, parameterized by $\lambda$:
\begin{flalign}
\qquad\text{(BHO-$\ell_p$ LL)} && \Omega_p(\lambda) \,\, \coloneqq \,\, \arg \min_{\boldsymbol{\beta}} \quad & \dfrac{1}{|T|} \sum_{i \in T} \left( b u_i + h o_i \right) + \lambda || \boldsymbol{\beta} ||^2_p \label{bho-ll-obj} \\
&& \text{s.t.} \quad & u_i \geq d_i - \boldsymbol{\beta}^{\top} \mathbf{x}_i &&\forall i \in T \label{bho-ll-u-constr}\\
&&& o_i \geq \boldsymbol{\beta}^{\top} \mathbf{x}_i - d_i &&\forall i \in T \label{bho-ll-o-constr} \\
&&& u_i \geq 0, \, o_i \geq 0 &&\forall i \in T \qquad\qquad \label{bho-ll-end}
\end{flalign}
The upper-level objective~\eqref{bho-start} minimizes the newsvendor cost on the validation set~$V$ and the lower-level objective~\eqref{bho-ll-obj} minimizes the regularized newsvendor cost on the training set~$T$.
Constraints~\eqref{bho-ul-u-constr} and~\eqref{bho-ll-u-constr} define the inventory shortage for period $i \in V$ and $i \in T$, respectively, given the decision function parametrized by $\boldsymbol{\beta}$. Constraints~\eqref{bho-ul-o-constr} and~\eqref{bho-ll-o-constr} define the surplus inventory at period $i \in V$ and $i \in T$. Constraints~\eqref{bho-ul-vars-uo}, \eqref{bho-ul-beta-constr}, and \eqref{bho-ll-end} define the variable domains.


So far, we define the \gls{BHO} formulation in~\eqref{bho-start}--\eqref{bho-ll-end} for a general $\ell_p$-norm, which leads to a different model for different $p$. In the following, we illustrate some properties of \gls{BHO} under the special case of the $\ell_0$-norm regularization, which minimizes the number of non-zero elements in the $\boldsymbol{\beta}$ vector. In this case, we introduce the binary variable $z^j$ to indicate whether coefficient $\beta^j$ is non-zero. The lower-level problem can then be formulated as a \gls{MIP}:
\begin{flalign}
\qquad \text{(BHO-$\ell_0$ LL)} && \Omega_0(\lambda) \,\, \coloneqq \,\, \arg \min_{\boldsymbol{\beta}} \quad & \dfrac{1}{|T|} \sum_{i \in T} \left( b u_i + h o_i \right) + \lambda \sum_{j \in J} z^j \\
&&\text{s.t.} \quad & \text{\eqref{bho-ll-u-constr}--\eqref{bho-ll-end}} \notag\\
&&& \beta^j = 0 \text{ if } z^j = 0 && \forall j \in J \label{ind-constr}\\
&&& z^j \in \{0,1\} && \forall j \in J, \qquad\qquad
\end{flalign}
where Constraints~\eqref{bho-ll-u-constr}--\eqref{bho-ll-end} define the shortage and surplus inventory and Constraints~\eqref{ind-constr} enforce that $\beta^j = 0$ if the corresponding feature is not selected.


The \gls{BHO} formulation~\eqref{bho-start}--\eqref{bho-ll-end} generalizes many common methods for hyperparameter optimization. 
To avoid the high computational effort of solving the \gls{BHO} model to optimality, existing methods relax the assumption that $\lambda$ can take any value in $\mathbb{R}_{\geq 0}$, and consider a finite support set $\Lambda \subseteq \mathbb{R}_{\geq 0}$ instead (\citealt{bergstra2012random,pmlr-v28-bergstra13}). For example, suppose the values in $\Lambda$ are equally spaced along a grid, i.e., a line segment, then the resulting model corresponds to the well-known \textit{grid search} method. If the values in $\Lambda$ are randomly selected in a closed region, then the formulation describes the \textit{random search} method. Other approaches, e.g., based on Bayesian optimization, would perform an adaptive search, iteratively selecting a value ${\lambda}$ for the upper-level variable and then optimizing the lower-level problem. The iterative selection of new values for $\lambda$ depends on the validation performance of previously selected points.
In essence, current methods for hyperparameter optimization, such as the examples described above, are heuristics that avoid solving the \gls{BHO} model to optimality.

\subsection{Bilevel Feature Selection (BFS)}
\label{subsection-BFS}


To remedy the drawback of \gls{BHO}, we introduce a bilevel programming formulation specifically designed for feature selection. Instead of penalizing the number of selected features, we propose a more intuitive model, in which the upper-level problem selects a subset of features that minimize the empirical cost on a validation set. We then reformulate the resulting model into a single-level problem, which is computationally more tractable, and finally compare the proposed \gls{BFS} and \gls{BHO} models.

Consider our original data set $S$, which we partition into a training set $T$ and a validation set~$V$. We introduce binary variables $z^j$, for $j\in J$, to indicate whether feature $j$ is marked as relevant ($z^j=1$) or not ($z^j=0$). 
In the upper-level, \gls{BFS} selects a subset of features that minimizes the empirical cost on the validation set $V$. The lower-level problem then learns the optimal coefficients of the decision function in the training set $T$ by solving the \gls{ERM} model using only the features selected in the upper-level. We formulate the resulting upper-level problem as follows:
\begin{flalign}
\qquad \text{(BFS UL)} && C^*_{\text{BFS}} \,\, = \,\, \min_{\mathbf{z}} \quad & \dfrac{1}{|V|} \sum_{i \in V} \left( b u_i + h o_i \right) \label{bfs-start} \\
&&\text{s.t.} \quad & \text{\eqref{bho-ul-u-constr}--\eqref{bho-ul-vars-uo}} \notag\\
&&& z^j \in \{0, 1\} &&\forall j \in J \qquad\qquad \label{bfs-ul-vars-z} \\
&&& \boldsymbol{\beta} \in \Pi_0 (\mathbf{z}), \label{bfs-ul-vars-beta}
\end{flalign}
where $\Pi_0(\mathbf{z})$ is the set of optimal solutions $\boldsymbol{\beta}$ to the lower-level problem:
\begin{flalign}
\qquad \text{(BFS LL)} && \Pi_0 (\mathbf{z}) \,\, \coloneqq \,\, \arg \min_{\boldsymbol{\beta}} \quad & \dfrac{1}{|T|} \sum_{i \in T} \left( b u_i + h o_i \right) \label{bfs-ll-obj} \\
&&\text{s.t.} \quad & \text{\eqref{bho-ll-u-constr}--\eqref{bho-ll-end}} \notag\\
&&& \beta^j = 0 \text{ if } z^j = 0 && \forall j \in J \qquad\qquad \label{bfs-ll-ind-constr} 
\end{flalign}
The upper and lower-level objectives~\eqref{bfs-start} and~\eqref{bfs-ll-obj} minimize the newsvendor cost on the validation set~$V$ and training set~$T$, respectively.
Constraints~\eqref{bho-ul-u-constr}--\eqref{bho-ul-vars-uo} and \eqref{bho-ll-u-constr}--\eqref{bho-ll-end} define the shortage and surplus inventory.
Constraints~\eqref{bfs-ul-vars-z}--\eqref{bfs-ul-vars-beta} define the variable domains and Constraints~\eqref{bfs-ll-ind-constr} ensure that $\beta^j = 0$ if feature $j$ is not selected.


We reformulate Model~\eqref{bfs-start}--\eqref{bfs-ll-ind-constr} by substituting the lower-level problem by its \gls{KKT} conditions (cf. \citealt{cao2006capacitated,fontaine2014benders}). We introduce the dual variables $\mu_i$, and~$\gamma_i$ corresponding to constraints~\eqref{bho-ll-u-constr} and \eqref{bho-ll-o-constr} of the lower-level problem. The equivalent single-level (SL) optimization problem can then be expressed by using indicator constraints: 
\begin{flalign}
\qquad \text{(BFS SL)} && \min \quad & \dfrac{1}{|V|} \sum_{i \in V} \left( b u_i + h o_i \right) \label{bl-single-level-start} \\
&& \text{s.t.} \quad & \text{\eqref{bho-ul-u-constr}--\eqref{bho-ul-vars-uo}, \eqref{bho-ll-u-constr}--\eqref{bho-ll-end}, \eqref{bfs-ul-vars-z}, \eqref{bfs-ll-ind-constr}} \notag\\
&&& \dfrac{1}{|T|} \sum_{i \in T} (b u_i + h o_i) \leq \sum_{i \in T} (\gamma_i - \mu_i ) d_i \label{opt-condition-constr} \\
&&& \mu_i + \dfrac{b}{|T|} \geq 0 && \forall i \in T \label{dual-mu-constr} \\
&&& \gamma_i + \dfrac{h}{|T|} \geq 0 &&\forall i \in T \label{dual-gamma-constr} \\
&&& \sum_{i \in T} (\mu_i - \gamma_i) x_i^j = 0 \text{ if } z^j = 1 &&\forall j \in J \label{dual-beta-constr} \\
&&& \mu_i \leq 0, \, \gamma_i \leq 0 &&\forall i \in T \label{bfs-o-vars} 
\end{flalign}
%
As before, Constraints~\eqref{bho-ul-u-constr}--\eqref{bho-ul-vars-uo} and \eqref{bho-ll-u-constr}--\eqref{bho-ll-end} define the shortage and surplus inventory. Constraints~\eqref{bfs-ul-vars-z} and \eqref{bfs-ll-ind-constr} model the selection of features.
Constraint~\eqref{opt-condition-constr} represents the optimality condition of the lower-level problem, by comparing its primal objective value with the corresponding dual objective value.
Constraints~\eqref{dual-mu-constr} and~\eqref{dual-gamma-constr} are the dual constraints of the lower-level problem associated with primal variables $u_i$ and $o_i$ for $i \in T$. 
Constraints~\eqref{dual-beta-constr} are the dual constraints related to the primal variables~$\beta^j$ for $j \in J$, and Constraints~\eqref{bfs-o-vars} define the domain of the dual variables. 
The single-level reformulation has $2n+2|T|+2|J|$ variables and $2n+2|T|+2|J|+1$ constraints.



The \gls{BFS} model shares some similarities with the \gls{BHO} model. Both models have the same upper-level objective and the lower-level objectives differ only in the presence of the regularization term. We provide an overview of the main properties of both models in Table~\ref{table:BHO_vs_BFS}. The main advantage of the \gls{BFS} model regarding tractability is due to the existence of binary variables being limited to the upper-level problem. Consequently, we can reformulate the \gls{BFS} model into a \gls{MILP} and leverage the power of today's optimization software to find optimal solutions.

\begin{table}[t]
\centering
\hspace*{0.25cm}
{\linespread{1.4}\selectfont
\begin{tabular}{ >{\centering}m{2.1cm} >{\centering}m{2.1cm}| >{\centering}m{5.1cm} |>{\centering\arraybackslash}m{5.1cm}}
 \hline
 \multicolumn{2}{c|}{Formulation} & \gls{BHO} ($\ell_0$-norm reg.) & \gls{BFS} \\ 
 \hline
 \hline
 \multirow{2}{*}{Upper-level} & Objective & minimize validation cost & minimize validation cost \\ 
 \cline{2-4} 
 & Variables & $\lambda \in \mathbb{R}_{\geq 0}$ & $\mathbf{z} \in \{0,1\}^{m+1}$ \\
 \hline
 \multirow{2}{*}{Lower-level} & Objective & minimize training cost + regularization & minimize training cost \\ 
 \cline{2-4}
 & Variables & $\boldsymbol{\beta} \in \mathbb{R}^{m+1}, \, \mathbf{z} \in \{0,1\}^{m+1}$ & $\boldsymbol{\beta} \in \mathbb{R}^{m+1}$ \\
 \hline
\end{tabular}
}
\caption{Comparison between \gls{BFS} and \gls{BHO} with $\ell_0$-norm regularization}
\label{table:BHO_vs_BFS}
\end{table}

Moreover, the following results show that the optimal cost of the \gls{BFS} model is a lower bound to the optimal cost of the \gls{BHO} model when adopting $\ell_0$-norm regularization.

\begin{lemma}\label{lemma-1}
	Given a fixed selection of features $\mathbf{z}$ for both \gls{BHO} and \gls{BFS}, i.e., $\mathbf{z}_{\text{BHO}} = \mathbf{z}_{\text{BFS}} = \mathbf{z}'$ (assuming that $\mathbf{z}'$ is feasible for both problems), solving the remaining problems for the rest of the decision variables yields optimal solutions $\boldsymbol{\beta}_{\text{BHO}}\big|_{\mathbf{z}=\mathbf{z}'} = \boldsymbol{\beta}_{\text{BFS}}\big|_{\mathbf{z} = \mathbf{z}'}$
	with costs $C_{\text{BHO}} \big|_{\mathbf{z} = \mathbf{z}'} = C_{\text{BFS}}\big|_{\mathbf{z} = \mathbf{z}'}$.
\end{lemma}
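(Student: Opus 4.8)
The plan is to exploit the fact that, once the feature-selection vector is fixed, the two lower-level problems differ only by an additive constant in their objective. First I would fix $\mathbf{z} = \mathbf{z}'$ in both models and write out the resulting lower-level problems in the remaining variables $\boldsymbol{\beta}$, $u_i$, $o_i$ ($i \in T$). With $\mathbf{z}$ fixed, the feasible regions of the two lower-level problems coincide: both are defined by the shortage and surplus constraints \eqref{bho-ll-u-constr}--\eqref{bho-ll-end} together with the indicator constraints $\beta^j = 0$ whenever $z'^j = 0$ (these are \eqref{bfs-ll-ind-constr} for \gls{BFS} and the analogous constraints for \gls{BHO}). Hence the feasibility assumption in the statement is consistent across both models, and I only need to compare objective values over this common feasible set.

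Second, I would observe that on this common feasible set the \gls{BHO} lower-level objective equals the \gls{BFS} lower-level objective \eqref{bfs-ll-obj} plus the term $\lambda \sum_{j \in J} z'^j$. Since $\mathbf{z}'$ and $\lambda$ are held fixed, this term is a constant that does not depend on the remaining decision variables. Minimizing a function plus a constant yields the same set of minimizers as minimizing the function alone, so the two argmin sets coincide, $\Omega_0(\lambda)\big|_{\mathbf{z}=\mathbf{z}'} = \Pi_0(\mathbf{z}')$ viewed as sets of optimal $\boldsymbol{\beta}$. This immediately gives $\boldsymbol{\beta}_{\text{BHO}}\big|_{\mathbf{z}=\mathbf{z}'} = \boldsymbol{\beta}_{\text{BFS}}\big|_{\mathbf{z}=\mathbf{z}'}$.

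Finally, for the cost equality I would note that the upper-level objectives \eqref{bho-start} and \eqref{bfs-start} are the identical function $\frac{1}{|V|}\sum_{i \in V}(b u_i + h o_i)$, and that for a given $\boldsymbol{\beta}$ the validation variables $u_i, o_i$ ($i \in V$) are optimized subject to the same constraints \eqref{bho-ul-u-constr}--\eqref{bho-ul-vars-uo} in both models; hence the optimal validation cost is the same function of $\boldsymbol{\beta}$. Evaluating this common function at the shared optimizer $\boldsymbol{\beta}_{\text{BHO}}\big|_{\mathbf{z}=\mathbf{z}'} = \boldsymbol{\beta}_{\text{BFS}}\big|_{\mathbf{z}=\mathbf{z}'}$ then yields $C_{\text{BHO}}\big|_{\mathbf{z}=\mathbf{z}'} = C_{\text{BFS}}\big|_{\mathbf{z}=\mathbf{z}'}$.

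The only delicate point, and the one I would treat with care, is the possibility that the lower-level optimum in $\boldsymbol{\beta}$ is not unique, so that the argmin is a set rather than a single point. This is not a genuine obstacle here: because the two objectives differ by an additive constant they have literally the same minimizer set, so any tie-breaking rule selects the same $\boldsymbol{\beta}$ in both models. To be rigorous I would phrase the intermediate claim as an equality of optimal solution sets, from which the equality of validation costs follows regardless of which optimal $\boldsymbol{\beta}$ is selected.
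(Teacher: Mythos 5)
Your proof is correct and follows essentially the same route as the paper's: with $\mathbf{z}=\mathbf{z}'$ fixed, the $\ell_0$ regularization term $\lambda\sum_{j\in J} z'^j$ becomes an additive constant, so the two lower-level problems share the same minimizers, and the identical upper-level objectives then give equal costs. Your explicit treatment of the argmin as a set (handling possible non-uniqueness of the lower-level optimum) is a small rigor improvement over the paper's statement, which implicitly assumes a common optimizer, but it does not change the argument.
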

\begin{proof}
By fixing $\mathbf{z}_{\text{BHO}} = \mathbf{z}'$, the regularization term in the lower-level objective becomes constant and $\lambda$ does not influence the optimal solution. Therefore, we can ignore regularization and the lower-level problem of the \gls{BHO} becomes equal to the lower-level problem of the \gls{BFS}, leading to \mbox{$\boldsymbol{\beta}_{\text{BHO}}\big|_{\mathbf{z}=\mathbf{z}'} = \boldsymbol{\beta}_{\text{BFS}}\big|_{\mathbf{z} = \mathbf{z}'}$} as the optimal solution. Since the upper-level objectives are equal in both models, the optimal costs will be equal: $C_{\text{BHO}} \big|_{\mathbf{z} = \mathbf{z}'} = C_{\text{BFS}}\big|_{\mathbf{z} = \mathbf{z}'}$. \end{proof}

\begin{proposition}\label{prop-1}
	The optimal cost of the \gls{BFS} model is a lower bound for the optimal cost of the \gls{BHO} model with $\ell_0$-norm regularization: $C_{\text{BHO}}^* \geq C_{\text{BFS}}^*$.
\end{proposition}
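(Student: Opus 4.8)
The plan is to exhibit, for any optimal solution of the \gls{BHO} model with $\ell_0$-norm regularization, a feasible feature selection for \gls{BFS} that attains exactly the same validation cost; since \gls{BFS} minimizes over all such selections, this immediately delivers $C^*_{\text{BFS}} \leq C^*_{\text{BHO}}$, which is the claim.

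First I would fix an optimal solution of the \gls{BHO} model, consisting of the upper-level value $\lambda^*$ together with the lower-level optimizers $\boldsymbol{\beta}^*$ and binary indicators $\mathbf{z}^*$ returned by the (BHO-$\ell_0$ LL) problem; by definition this solution achieves cost $C^*_{\text{BHO}}$ on the validation set $V$. From it I read off the induced feature selection $\mathbf{z}' \coloneqq \mathbf{z}^*$. Because \gls{BHO} and \gls{BFS} impose the same indicator constraints, namely \eqref{ind-constr} and \eqref{bfs-ll-ind-constr}, over the same binary domain $\mathbf{z} \in \{0,1\}^{m+1}$, the selection $\mathbf{z}'$ is feasible for the upper-level of \gls{BFS}.

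Next I would fix $\mathbf{z} = \mathbf{z}'$ in both models and invoke Lemma~\ref{lemma-1}. The key observation is that once $\mathbf{z}$ is fixed to $\mathbf{z}'$, the penalty $\lambda^* \sum_{j \in J} z^j$ in the \gls{BHO} lower-level objective is a constant and therefore cannot affect the optimizer, so the \gls{BHO} lower level collapses to precisely the unregularized training-cost minimization that defines the \gls{BFS} lower level. Consequently the two lower levels share the same optimal coefficient vector and induce the same validation cost, i.e. $C_{\text{BHO}}\big|_{\mathbf{z}=\mathbf{z}'} = C_{\text{BFS}}\big|_{\mathbf{z}=\mathbf{z}'}$. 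In particular $\boldsymbol{\beta}^*$ is itself the \gls{BFS} lower-level optimizer at $\mathbf{z}'$, so the validation cost that \gls{BFS} attains at this selection equals $C^*_{\text{BHO}}$.

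Finally, since the upper level of \gls{BFS} minimizes validation cost over all feasible selections and $\mathbf{z}'$ is one admissible choice, I would conclude $C^*_{\text{BFS}} \leq C_{\text{BFS}}\big|_{\mathbf{z}=\mathbf{z}'} = C^*_{\text{BHO}}$. The one step requiring care is the transition in the previous paragraph: I must ensure that the $\boldsymbol{\beta}^*$ produced at the \gls{BHO} optimum is simultaneously a minimizer of the \emph{pure} training cost over the support fixed by $\mathbf{z}'$, and not merely of the regularized cost. This is exactly what Lemma~\ref{lemma-1} supplies through the constancy of the penalty under fixed $\mathbf{z}'$, so no additional argument about the interaction between $\lambda^*$ and $\boldsymbol{\beta}^*$ is needed, and the inequality follows.
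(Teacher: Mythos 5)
Your proof is correct and follows essentially the same route as the paper's: both take the optimal \gls{BHO} feature selection $\mathbf{z}^*$, feed it to the \gls{BFS} upper level, and invoke Lemma~\ref{lemma-1} to equate the resulting validation costs, yielding $C^*_{\text{BFS}} \leq C_{\text{BFS}}\big|_{\mathbf{z}=\mathbf{z}^*} = C^*_{\text{BHO}}$. Your direct phrasing (and your explicit check that $\boldsymbol{\beta}^*$ minimizes the \emph{unregularized} training cost once $\mathbf{z}$ is fixed, since the penalty becomes constant) is if anything slightly more careful than the paper's improvement-style argument, but it is the same proof in substance.
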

\begin{proof}
	Let $\mathbf{z}_{\text{BHO}} = \mathbf{z}^*$ be the optimal selection of features according to \gls{BHO} with cost $C_{\text{BHO}}^*$. Suppose that $\mathbf{z}_{\text{BFS}} = \mathbf{z}'$ is a solution to \gls{BFS}, such that $C_{\text{BHO}}^* < C_{\text{BFS}}\big|_{\mathbf{z}=\mathbf{z}'}$. We can always improve the cost of \gls{BFS} by setting $\mathbf{z}_{\text{BFS}} = \mathbf{z}^*$ in the upper-level problem. Because of Lemma~\ref{lemma-1}, this will result in a new solution with cost $C_{\text{BFS}}\big|_{\mathbf{z}=\mathbf{z}^*} = C_{\text{BHO}}^* \geq C^*_{\text{BFS}}$. 
\end{proof}

\subsection{Bilevel Feature Selection with Cross-Validation (BFS-CV)}
\label{subsection-BFS-CV}


Cross-validation strategies often improve the generalization ability of machine learning models and prevent overfitting by using data re-sampling methods. Accordingly, we extend the \gls{BFS} model to cross-validation instead of simple hold-out validation. We consider $K$ training-validation splits of the data and search for the set of features that minimize the average cost over all $K$ validation sets. For each $k \in [K] = \{1, \dots, K\}$, we consider a subset of observations $S_k \subseteq S$ sampled from the original data set $S$. Analogously to \gls{BFS}, we partition the set $S_k$ into a training set $T_k$ and a validation set~$V_k$. 
We introduce variables $u_{ik}$ and $o_{ik}$ to model the inventory shortage and surplus, respectively, at the end of period $i\in T_k\cup V_k$ for each training-validation split $k\in[K]$.
We then learn the model parameters $\boldsymbol{\beta}_k \in \mathbb{R}^{m+1}$ using the corresponding training set $T_k$, and select features by minimizing the average validation cost over all validation sets $V_k$ for $k \in [K]$. The resulting problem is a bilevel program with $K$ lower-level problems:
\begin{flalign}
\qquad \text{(BFS-CV UL)} && \min \quad & \dfrac{1}{K} \sum_{k = 1}^{K} \dfrac{1}{|V_k|} \sum_{i \in V_k} \left( b u_{ik} + h o_{ik} \right) \label{bfs-cv-start}\\
&& \text{s.t.} \quad & u_{ik} \geq d_{i} - \boldsymbol{\beta}_{k}^{\top} \mathbf{x}_i &&\forall \, k \in [K], \forall i \in V_k \label{bfs-cv-ul-u-constr}\\
&&& o_{ik} \geq \boldsymbol{\beta}_k^{\top} \mathbf{x}_i - d_{i} &&\forall \, k \in [K], \forall i \in V_k \label{bfs-cv-ul-o-constr}\\
&&& u_{ik} \geq 0, \, o_{ik} \geq 0 &&\forall \, k \in [K], \forall i \in V_k \qquad \label{bfs-cv-vars-uo}\\
&&& z^j \in \{0, 1\} &&\forall j \in J \label{bfs-cv-vars-z}\\
&&& \boldsymbol{\beta}_k \in \Pi_k (\mathbf{z}) && \forall \, k \in [K], \label{bfs-cv-vars-beta}
\end{flalign}
where $\Pi_k(\mathbf{z})$ is the set of optimal solutions corresponding to the $k^{\text{th}}$ lower-level problem:
\begin{flalign}
\qquad \text{(BFS-CV LL)} && \Pi_k(\mathbf{z}) \,\, \coloneqq \,\, \arg \min_{\boldsymbol{\beta}_k} \quad & \dfrac{1}{|T_k|} \sum_{i \in T_k} \left( b u_{ik} + h o_{ik} \right) \\
&&\text{s.t.} \quad & u_{ik} \geq d_i - \boldsymbol{\beta}_k^{\top} \mathbf{x}_i &&\forall i \in T_k \label{bfs-cv-ll-u-constr}\\
&&& o_{ik} \geq \boldsymbol{\beta}_k^{\top} \mathbf{x}_i - d_i &&\forall i \in T_k \label{bfs-cv-ll-o-constr}\\
&&& u_{ik} \geq 0, \, o_{ik} \geq 0 &&\forall i \in T_k \qquad\qquad \label{bfs-cv-ll-vars-uo} \\
&&& \beta_k^j = 0 \text{ if } z^j = 0 &&\forall j \in J \label{bfs-cv-ll-ind-constr}
\end{flalign}
Constraints~\eqref{bfs-cv-ul-u-constr}--\eqref{bfs-cv-vars-uo} and~\eqref{bfs-cv-ll-u-constr}--\eqref{bfs-cv-ll-vars-uo} define the shortage and surplus inventory for period $i\in V_k$ and $i\in T_k$, respectively for each split $k\in[K]$. 
Constraints~\eqref{bfs-cv-vars-z} and \eqref{bfs-cv-vars-beta} define the variable domains and Constraints~\eqref{bfs-cv-ll-ind-constr} ensure that $\beta^j_k=0$ if feature $j$ is not selected for the training-validation split~$k$.

The above model can accommodate different cross-validation strategies, such as \mbox{$K$-fold}, random permutations (Shuffle \& Split), or Leave-P-Out cross-validation (see, e.g., \citealt{arlot2010survey,hastie2009elements}). Each particular choice of cross-validation strategy corresponds to a different approach for constructing the subsets $S_k$ and partitioning the data into $T_k$ and $V_k$. Moreover, the special case with $K=1$ corresponds to the previously introduced \gls{BFS} model.


Analogously to \gls{BFS}, \gls{BFS-CV} can be reformulated into a single-level \gls{MILP}:
\begin{flalign}
\quad \text{(BFS-CV SL)} && \min_{\mathbf{z}} \quad & \dfrac{1}{K} \sum_{k = 1}^{K} \dfrac{1}{|V_k|} \sum_{i \in V_k} \left( b u_{ik} + h o_{ik} \right) \label{bfs-cv-single-level-start} \\
&& \text{s.t.} \quad & \text{\eqref{bfs-cv-ul-u-constr}--\eqref{bfs-cv-vars-z}, \eqref{bfs-cv-ll-u-constr}--\eqref{bfs-cv-ll-ind-constr}}\notag\\
&&& \dfrac{1}{|T_k|} \sum_{i \in T_k} (b u_{ik} + h o_{ik}) \leq \sum_{i \in T_k} (\gamma_{ik} - \mu_{ik}) d_i && \forall \, k \in [K] \label{bfs-cv-opt-cond-constr}\\
&&& \mu_{ik} + \dfrac{b}{|T_k|} \geq 0 &&\forall k \in [K], \, \forall i \in T_k \label{bfs-cv-dual-mu-constr}\\
&&& \gamma_{ik} + \dfrac{h}{|T_k|} \geq 0 &&\forall k \in [K], \, \forall i \in T_k \label{bfs-cv-dual-gamma-constr}\\
&&& \sum_{i \in T_k} (\mu_{ik} - \gamma_{ik}) x_i^j = 0 \text{ if } z^j = 1 &&\forall k \in [K], \, \forall j \in J \label{bfs-cv-dual-beta-constr}\\
&&& \mu_{ik} \leq 0, \, \gamma_{ik} \leq 0 &&\forall k \in [K], \, \forall i \in T_k \label{bfs-cv-dual-vars}
\end{flalign}
Constraints~\eqref{bfs-cv-opt-cond-constr} represents the optimality condition of the lower-level problem, for each training-validation split $k\in[K]$. Constraints~\eqref{bfs-cv-dual-mu-constr} and~\eqref{bfs-cv-dual-gamma-constr} are the dual constraints of the lower-level problem associated with primal variables~$u_{ik}$ and $o_{ik}$ for $i\in T_k$ for $k\in[K]$. Constraints~\eqref{bfs-cv-dual-beta-constr} are the dual constraints related to the primal variables $\boldsymbol{\beta}^k$ for $k \in [K]$, and Constraints~\eqref{bfs-cv-dual-vars} define the domain of the dual variables. The single-level reformulation has \mbox{$K (4|T|+2|V|+|J|) + |J|$} variables and $K(4|T|+2|V|+2|J|+1)$ constraints.

\section{Experimental Design}\label{section-experimental-design}

To benchmark our approaches, we perform extensive computational experiments on synthetic instances.
The goals of our computational experiments are fourfold.
\begin{enumerate}[(i)]
	\item We evaluate the performance of a \gls{MILP} approach for the proposed \gls{BFS} and \gls{BFS-CV} models in terms of feature selection and generalization to out-of-sample data;
	\item We compare the performance of our approach against existing regularization-based methods;
	\item For each method, we compare the performance of hold-out validation against cross-validation;
	\item We analyze the effect of different instance parameters on each method's performance.
\end{enumerate}

We implemented all methods in \texttt{C++}, using \mbox{CPLEX 20.1} to solve the respective \gls{MILP} formulations. All experiments were performed on machines with Intel Core i7-6700 CPU at 3.40 GHz, with 16 GB of RAM, and Ubuntu 16.04.6 LTS operating system, under a time limit of 900 seconds. We provide the source code and data at [to be disclosed after peer-review].

\subsection{Instances}\label{section-instances}

We adapt the experimental setup from \citet{zhu2012semiparametric} and consider a linear demand model with additive noise:
\begin{equation}\label{linear-demand}
d_{\text{linear}}(\mathbf{x}) = 5 + \boldsymbol{\beta}^{\top} \mathbf{x} + \epsilon,
\end{equation}
%
where $\boldsymbol{\beta} = (0, 2, -2, -1, 1, 0, \dots, 0)^{\top}/ \sqrt{10}$ is an $(m+1)$-dimensional vector representing the ground-truth coefficients. The feature variables $\mathbf{x} \in \mathbb{R}^{m+1}$ are drawn from a multivariate Gaussian distribution with mean zero and covariance matrix with entries $\sigma_{ij} = 0.5^{|i-j|}$, for $(i,j) \in J^2$. 
The noise term follows a Gaussian distribution $\epsilon \sim \mathcal{N}(0, \sigma^2_{\epsilon})$.
We set negative demand values to zero.

We generate instances varying the number of samples $n$ from 40 to 2000 and the number of features~$m$ from 8 to 14. For each configuration, we generate 20 instances to account for variability in the distributions.
Additionally, we generate a separate test set with 1000 observations associated with each instance, following the same distributions.

Furthermore, we analyze the impact of demand misspecification, i.e., when the demand is not linearly related to the features. We investigate the following nonlinear demand model (cf. \citealt{zhu2012semiparametric}):
\begin{equation}\label{nonlinear-demand}
d_{\text{nonlinear}}(\mathbf{x}) = 10 + \sin \big(2(\boldsymbol{\beta}^{\top} \mathbf{x})\big) + 2 \exp \big( -16(\boldsymbol{\beta}^{\top} \mathbf{x})^2 \big) + \varphi(\boldsymbol{\beta}^{\top} \mathbf{x}) \epsilon,
\end{equation}
where $\varphi(\boldsymbol{\beta}^{\top} \mathbf{x})=1$ for a homoscedastic case and $\varphi(\boldsymbol{\beta}^{\top} \mathbf{x}) = \exp (\boldsymbol{\beta}^{\top} \mathbf{x})$ for a heteroscedastic case.

We solve the proposed \gls{BFS} and \gls{BFS-CV} models and compare the results against regularization-based methods from the literature. Since our motivation for feature selection is to provide more explainable decisions, we focus on methods based on linear decision functions, which are intrinsically more explainable.
We consider the \gls{ERM} model of \citet{ban2019big} with $\ell_0$ and $\ell_1$-norm regularization and use grid search with 50 break-points to calibrate the regularization parameter. We run each considered method once using a hold-out validation set and once with Shuffle \& Split cross-validation (CV). For hold-out validation, we use half of the samples in each instance as a training set and the other half as a validation set, following the setting in~\cite{zhu2012semiparametric}. For cross-validation, we perform $K=50$ re-sampling iterations, where we sample a subset of size $|S_k| = \min\{200,n\}$, for each $k \in [K]$.

\subsection{Performance metrics}\label{section-performance-metrics}

We assess the ability of each method to recover the ground-truth feature vector, adopting the accuracy measure:
\begin{equation}
\alpha = \dfrac{1}{m} \sum_{j=1}^{m} \mathbbm{1} (\hat{z}_j = z^*_j),
\end{equation}
where $\mathbbm{1}(\cdot)$ is the indicator function, $\hat{\mathbf{z}} = [\hat{z}_1, \dots, \hat{z}_m]$ is the estimated binary feature vector, and $\mathbf{z}^* = [z^*_1, \dots, z^*_m]$ is the ground-truth vector, defined as $z^*_j=0$ if $\beta^*_j=0$, i.e., feature $j$ is non-informative, otherwise $z^*_j=1$. 
Our definition of accuracy is analogous to the one commonly adopted for binary classification, where $z^*_j \in \{0,1\}$ represents the class assigned to feature~$j$.

Additionally, we analyze the cost performance of applying the learned decision functions to out-of-sample data. 
We therefore evaluate the out-of-sample cost on a separate test data set with 1000 observations. We report the test cost values of each method $\mathcal{M}$ in terms of its percentage deviation relative to the test cost achieved by \gls{BFS-CV} on the same configuration:
\begin{equation}\label{test-cost-deviations}
\delta(\mathcal{M}) = 100 \times \frac{ C_{\mathcal{M}}^{\textit{test}} - C^{\textit{test}}_{\text{BFS-CV}} }{C^{\textit{test}}_{\text{BFS-CV}}},
\end{equation}
where $C_{\mathcal{M}}^{\textit{test}}$ is the average newsvendor cost of method $\mathcal{M}$ calculated on the test set. Deviation values greater than zero indicate that \gls{BFS-CV} improves upon method $\mathcal{M}$ regarding test cost performance, while values below zero indicate that method $\mathcal{M}$ achieves lower test cost than \gls{BFS-CV}.

\section{Results}\label{section-experiments}


First, we present results concerning instances generated by the linear demand model, and then discuss results on instances with nonlinear demand. 

\subsection{Linear demand model}
\label{section-experiments-linear}

We analyze how instance size, number of features, noise level, shortage cost, and holding cost affect the performance of each method. Unless otherwise stated, we use a setting with $n=1000$ samples, $m=10$ features, a shortage cost of~$b=2$, and a holding cost of~$h=1$ as reference configuration. For the noise term, we consider $\sigma_{\epsilon}=1$ as reference configuration for the Gaussian distribution.
We provide results regarding computation times in~\ref{appendix-computation-times}.

\vspace*{0.5cm}


\noindent\textbf{Instance size.} Figure~\ref{fig:accuracy_feature_recovery} reports the feature recovery accuracy of the different methods for a varying number of samples~$n\in[40,2000]$, averaged over $20$ randomly generated instances. In general, \mbox{\gls{BFS-CV}} achieves the highest accuracy among all methods and faster convergence for increasing~$n$, with accuracy values above $0.9$ already for instances with $200$ samples. In contrast, existing methods often yield accuracy values below $0.9$ and fail to recover the ground-truth features accurately, even for instances with a larger number of samples. We confirm these results at $5\%$ significance level by pairwise Wilcoxon signed-rank tests, and refer to~\ref{appendix-linear} for details on the respective p-values. 


\begin{figure}[!ht]
	\centerline{\includegraphics[width=0.55\linewidth]{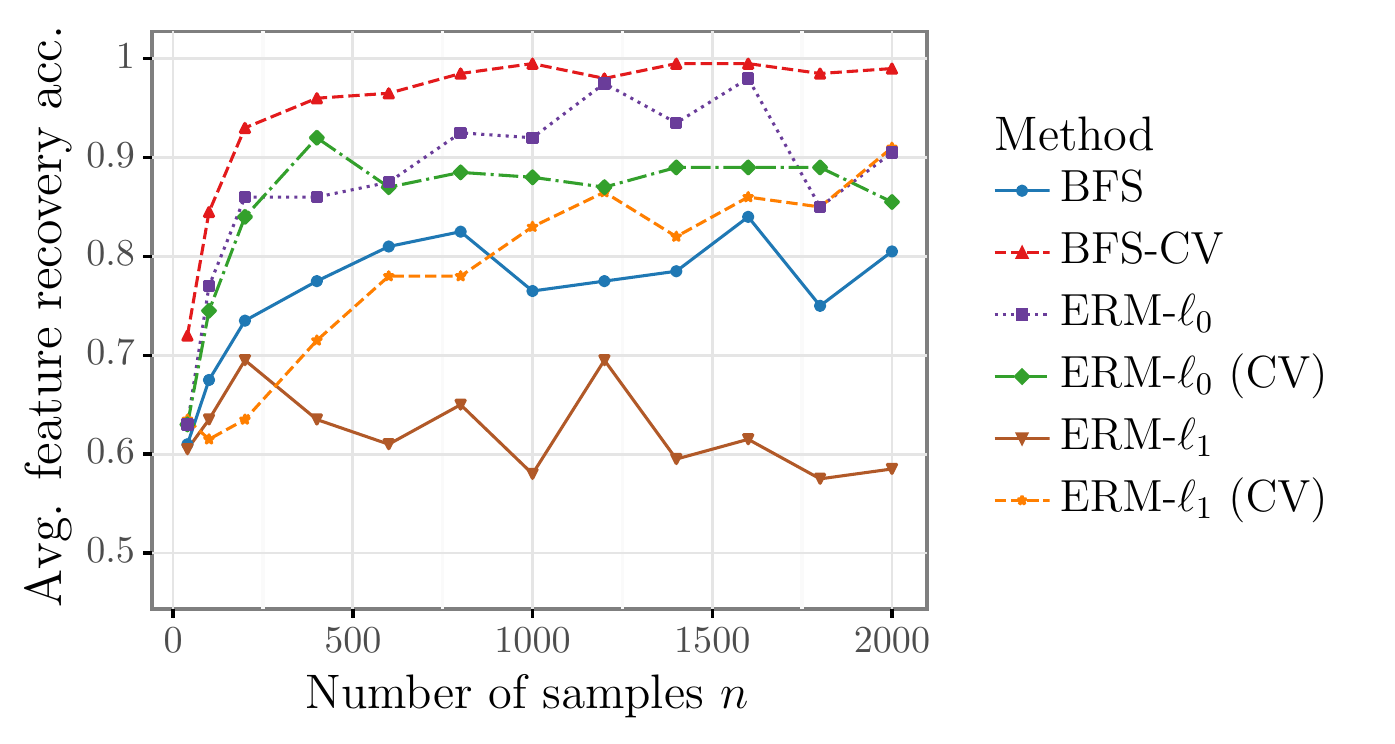}}
	\vspace{-0.2cm}
	\caption{Impact of instance size on the accuracy of feature recovery}
	\label{fig:accuracy_feature_recovery}
\end{figure}

Besides feature recovery, we evaluate the out-of-sample cost performance of each method. Figure~\ref{fig:relative_test_cost-nb_samples} shows the distribution of percentage deviations, where a positive deviation indicates that \gls{BFS-CV} is superior to the respective other method. We split the results in three different plots based on the sample size~$n$, classifying each instance as small, medium, or large. \gls{BFS-CV} outperforms the other methods in most cases, as the lower quartiles are always above or close to zero.
For smaller instances, test cost deviations can be as high as $30\%$ in the best case. 
As we increase~$n$, all methods present improving test cost performance and the variance in the test cost distribution decreases. 
Yet, \gls{BFS-CV} is still superior to the other methods in the wide majority of cases.
For large instances, all methods present mostly positive test cost deviations, with values ranging from $-1\%$ to $8\%$.



\begin{figure}[!t]
	\hspace{-0.6cm}
	\begin{tabular}{ccc}
		\includegraphics[width=0.36\linewidth]{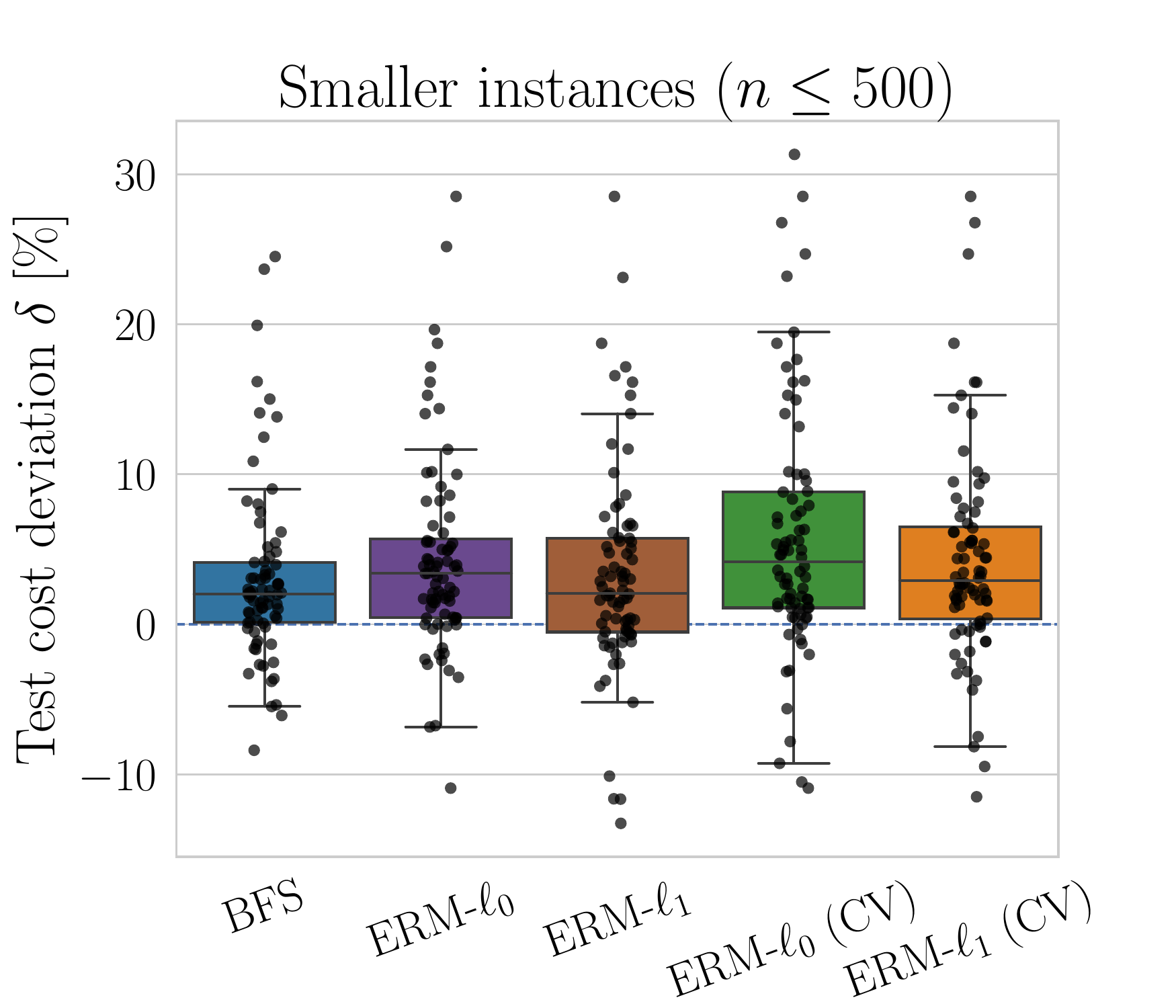} & \hspace{-0.7cm}\includegraphics[width=0.36\linewidth]{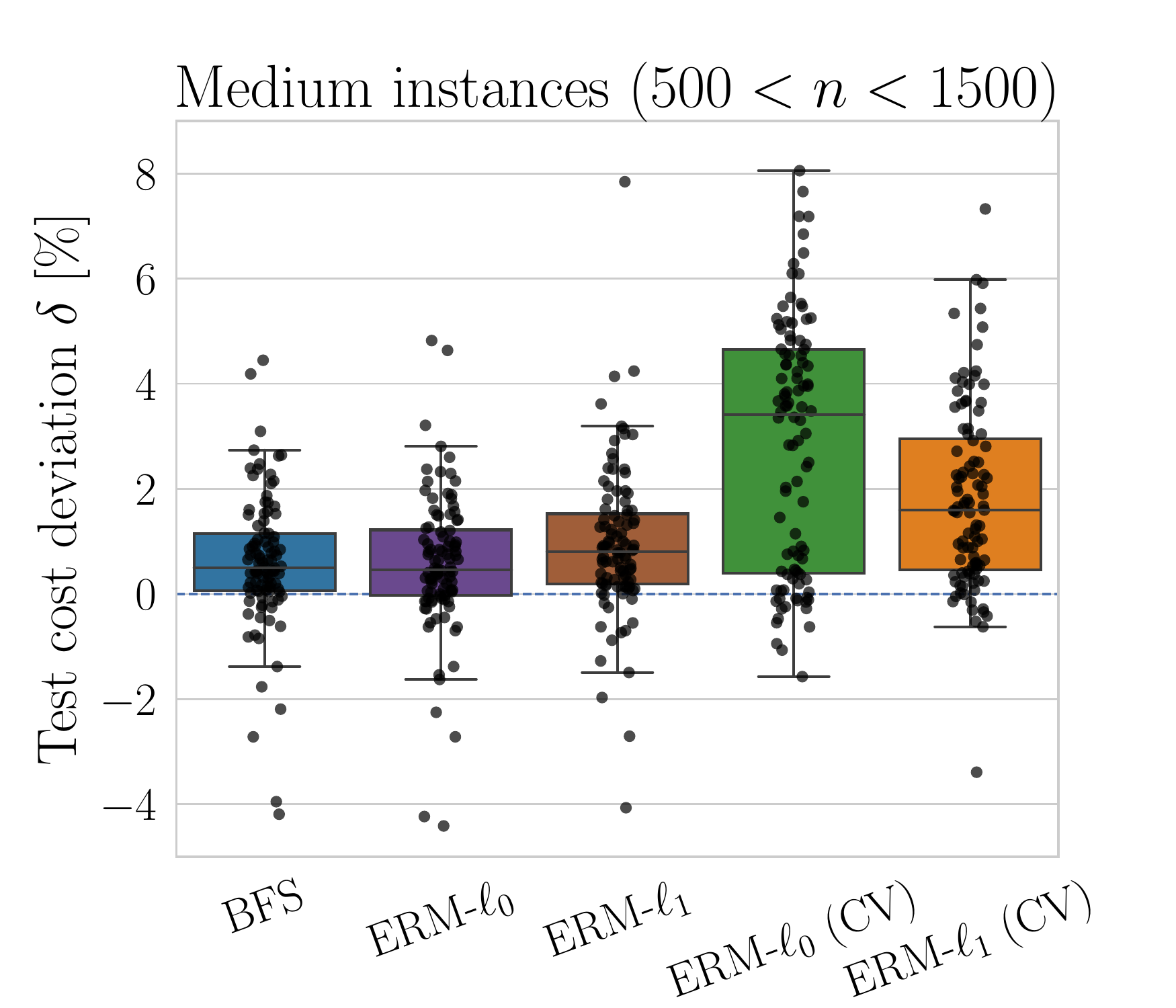} & \hspace{-0.7cm}\includegraphics[width=0.36\linewidth]{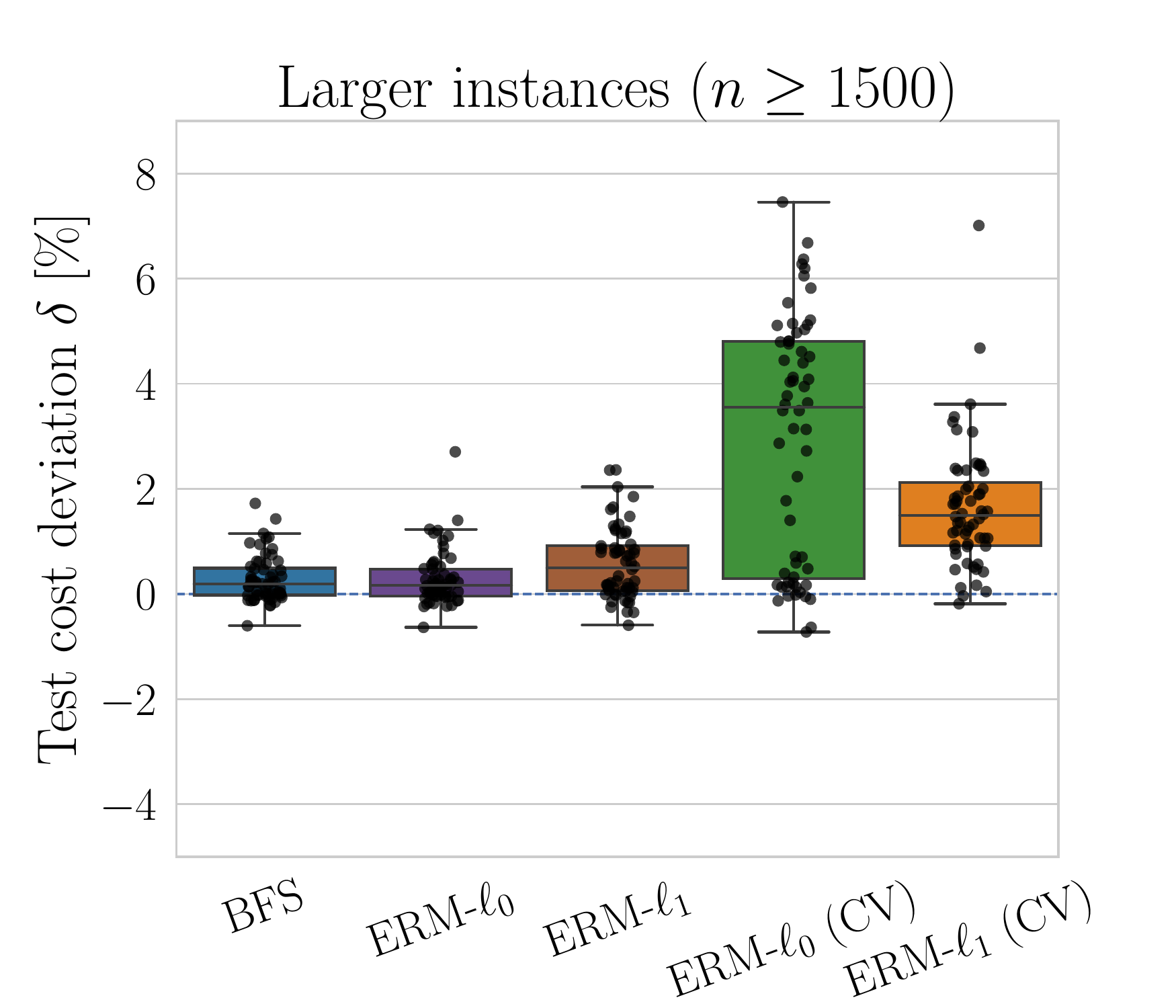} 
	\end{tabular}
	\vspace{-0.2cm}
	\caption{Impact of instance size on the test cost performance of different methods, relative to \gls{BFS-CV}}
	\label{fig:relative_test_cost-nb_samples}
\end{figure}

\noindent\textbf{Number of features.} Figure~\ref{fig:accuracy_feature_recovery-features} shows average feature recovery accuracy values, where we now fix the number of samples to $n=1000$ and vary the number of features $m \in \{8,10,12,14\}$. For all methods, the number of features does not strongly affect the accuracy performance. Notably, \gls{BFS-CV} achieves average accuracy values consistently above $0.95$, being superior to the other methods, as confirmed by pairwise Wilcoxon tests.

\begin{figure}[!ht]
	\centerline{\includegraphics[width=0.55\linewidth]{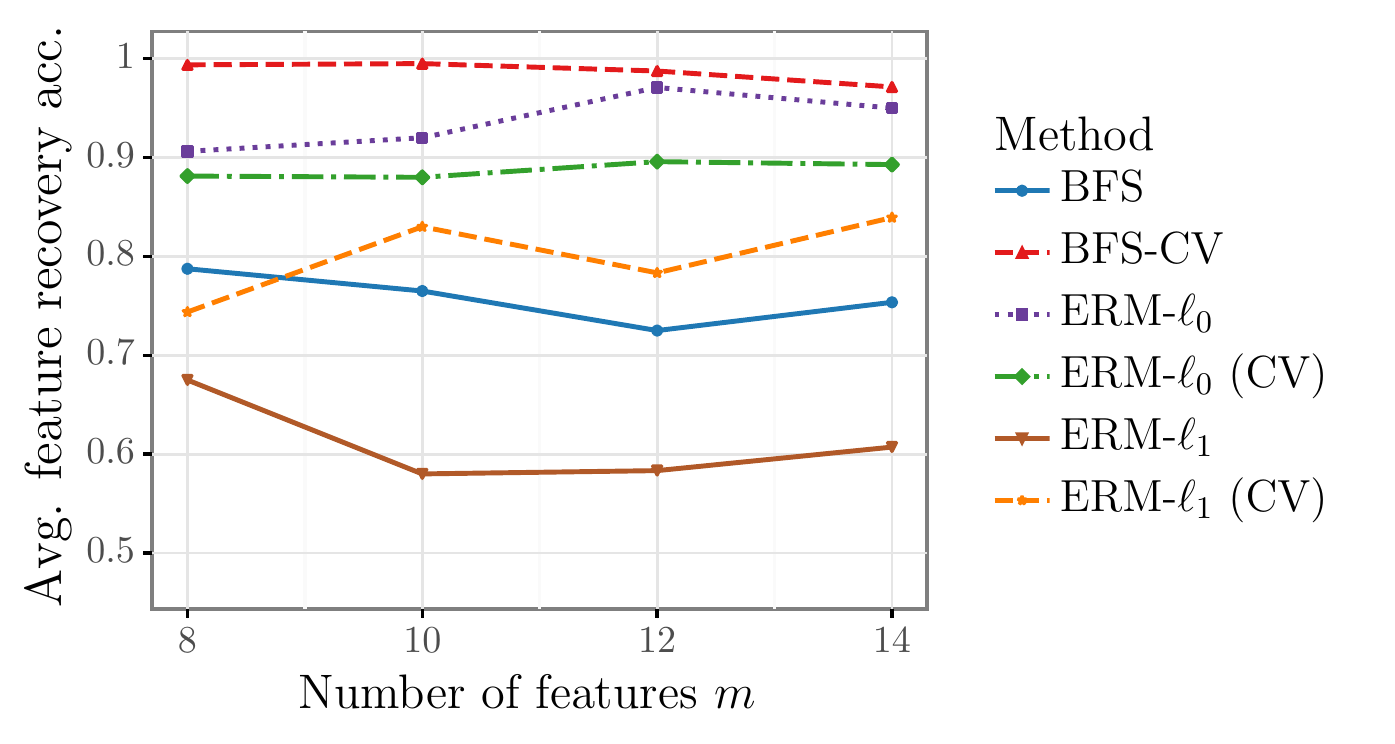}}
	\vspace{-0.2cm}
	\caption{Impact of number of features $m$ on the accuracy of feature recovery}
	\label{fig:accuracy_feature_recovery-features}
\end{figure}

Regarding test cost performance, Figure~\ref{fig:relative_test_cost-features} shows the distribution of test cost deviations. We focus on large instances ($n\geq1500$) in this analysis, which have lower variance, so that we can isolate the impact of $m$ on the test cost. For \gls{BFS}, \gls{ERM}-$\ell_0$, and \gls{ERM}-$\ell_1$, the number of features has no strong influence on the test cost deviations. In contrast, the performance of \gls{ERM}-$\ell_0$ (CV) and \gls{ERM}-$\ell_1$ (CV) shows increasing deviation values for increasing~$m$. In the majority of cases, \gls{BFS-CV} outperforms the other considered methods. 


\begin{figure}[!t]
	\centerline{\includegraphics[width=0.75\linewidth]{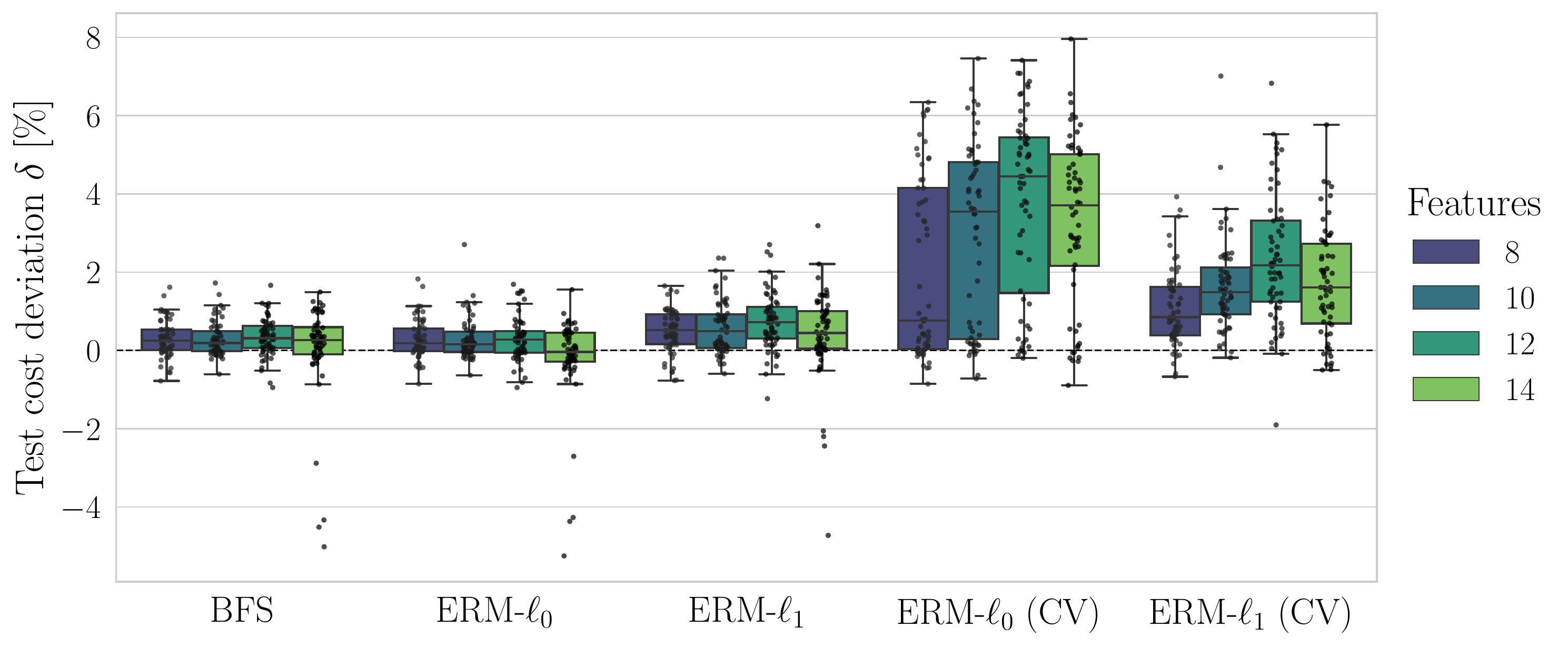}}
	\vspace{-0.2cm}
	\caption{Impact of number of features $m$ on the test cost performance of different methods, relative to \gls{BFS-CV}}
	\label{fig:relative_test_cost-features}
\end{figure}



\noindent\textbf{Noise level.} We vary the coefficient of variation $c_{\text{v}} = \sigma_{\epsilon} / \mu \in [0.2, 1]$ and report the average feature recovery accuracy for each method in Figure~\ref{fig:accuracy_feature_recovery-noise-level}. As we increase the level of noise, it becomes harder to recover the informative features and the accuracy of all methods deteriorates. For $c_{\text{v}}\in[0.2, 0.6]$, \mbox{\gls{BFS-CV}} achieves the highest accuracy among the considered methods.
Outside this range, \mbox{\gls{BFS-CV}} is outperformed by \gls{ERM}-$\ell_0$ (CV) for $c_{\text{v}}=0.1$ and by \gls{ERM}-$\ell_0$ for $c_{\text{v}} \geq 0.8$, respectively.
Still, \mbox{\gls{BFS-CV}} generally attains comparatively high accuracy, being superior to most other methods.

\begin{figure}[!ht]
	\centerline{\includegraphics[width=0.55\linewidth]{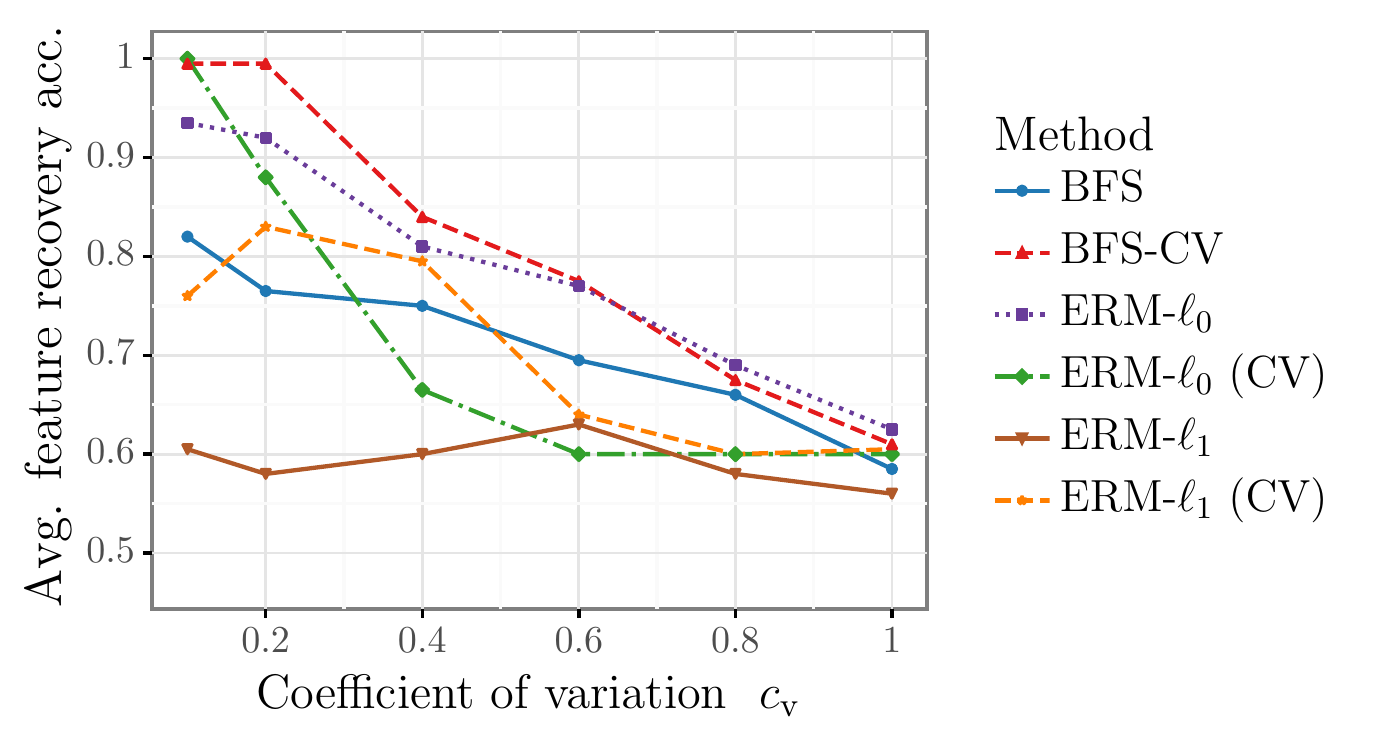}}
	\vspace{-0.2cm}
	\caption{Impact of different noise levels on the accuracy of feature recovery}
	\label{fig:accuracy_feature_recovery-noise-level}
\end{figure}

Figure~\ref{fig:relative_test_cost-noise} illustrates the impact of different noise levels on test cost performance, considering large instances ($n\geq1500$). 
Methods \gls{BFS}, \gls{ERM}-$\ell_0$, and \gls{ERM}-$\ell_1$ mostly outperform \gls{BFS-CV} for \mbox{$c_{\text{v}}\geq 0.4$}. In such cases, test cost deviations range from $-4\%$ to $4\%$, indicating that \gls{BFS-CV} achieves comparable results even when its performance is inferior to other methods.
Methods \mbox{\gls{ERM}-$\ell_0$} (CV) and \mbox{\gls{ERM}-$\ell_1$} (CV) perform comparatively worse, with mostly positive deviations and larger variance in the distribution. 

\begin{figure}[!t]
	\centerline{\includegraphics[width=0.85\linewidth]{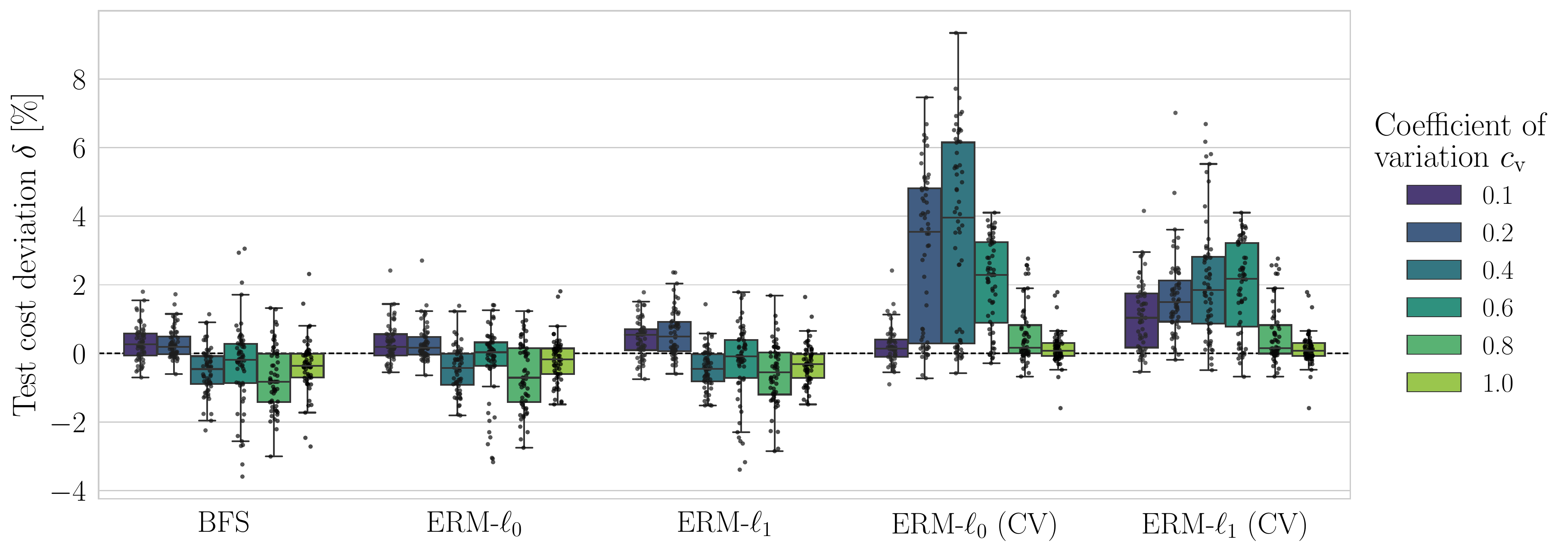}}
	\vspace{-0.2cm}
	\caption{Impact of noise level on the test cost performance of different methods, relative to \gls{BFS-CV}}
	\label{fig:relative_test_cost-noise}
\end{figure}

\vspace*{0.5cm}

\noindent\textbf{Shortage and holding costs.} Figure~\ref{fig:accuracy_feature_recovery-backorder_cost} displays results on the accuracy performance as a function of the newsvendor ratio $b/(b+h)$, by varying $(b,h)\in[1,10]^2$. In general, \gls{BFS-CV} has accuracy values consistently above $0.9$ and outperforms the other methods.

\begin{figure}[htbp]
	\centerline{\includegraphics[width=0.58\linewidth]{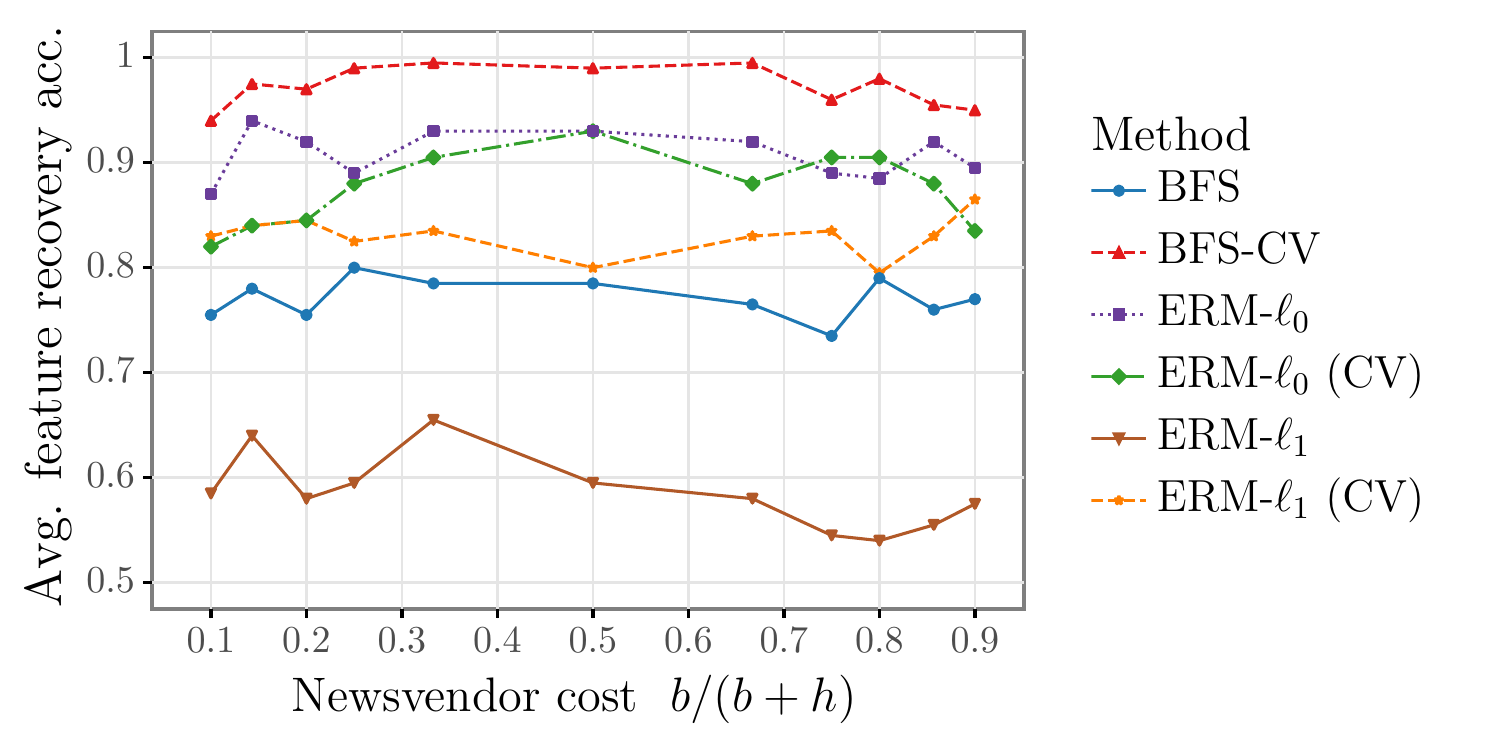}}
	\vspace{-0.2cm}
	\caption{Impact of shortage cost $b$ and holding cost $h$ on the accuracy of feature recovery}
	\label{fig:accuracy_feature_recovery-backorder_cost}
\end{figure}

Figure~\ref{fig:relative_test_cost-backorder_cost-linear_demand-gaussian_noise} shows results on test cost performance of each method as a function of $b$, for large instances ($n\geq1500$), where we fixed $h=1$, corresponding to newsvendor ratios $b/(b+h) \geq 0.5$. 
\gls{ERM}-$\ell_0$ (CV) and \gls{ERM}-$\ell_1$ (CV) generally perform worse than \gls{BFS-CV}, since the deviation values are often positive. For all considered methods, we observe that the test cost deviations range from $-6\%$ to $10\%$ and the variance increases with increasing $b$. We observed similar results for cases with $b/(b+h)<0.5$.

\begin{figure}[htbp]
	\centerline{\includegraphics[width=0.85\linewidth]{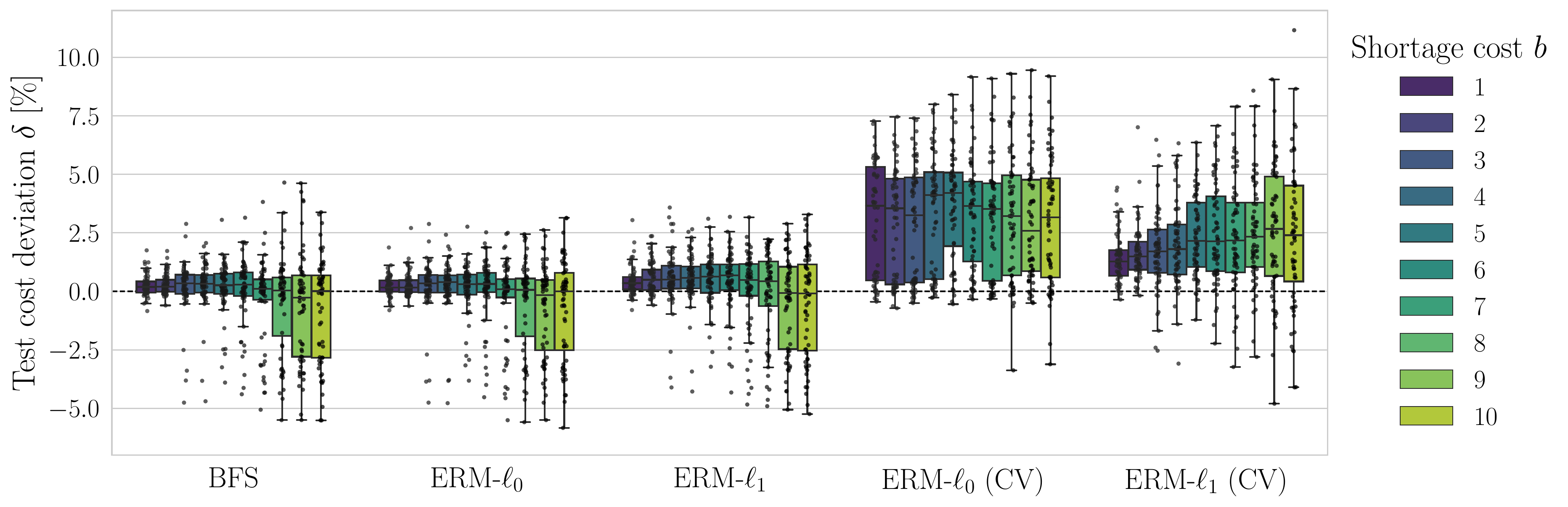}}
	\vspace{-0.2cm}
	\caption{Impact of shortage cost $b$ on the test cost performance of different methods, relative to \gls{BFS-CV}}
	\label{fig:relative_test_cost-backorder_cost-linear_demand-gaussian_noise}
\end{figure}

\subsection{Nonlinear demand model}
\label{section-experiments-nonlinear}

Demand may not be linearly related to the features. Therefore, we investigate how each method performs under nonlinear demand models, considering homoscedastic and heteroscedastic settings. 
In the following, we only present results regarding accuracy performance. Results on test cost performance did not provide new insights, since we observed similar patterns as in the case of linear instances (see~\ref{appendix-nonlinear}). 
Unless otherwise stated, we use the same reference configuration as in the previous section.

\vspace*{0.5cm}

\noindent\textbf{Instance size.} Due to the nonlinear structure of the demand models, Figure~\ref{fig:accuracy_feature_recovery-nonlinear} shows considerably lower accuracy values compared to instances with linear demand. For heteroscedastic instances, \gls{BFS-CV} outperforms existing methods, with accuracy values above $0.9$ already for $n=500$ samples.
For the homoscedastic case, all methods present inferior accuracy compared to the heteroscedastic case. In this setting, \gls{BFS-CV} and \gls{ERM}-$\ell_0$ present comparable performance, superior to the other considered methods.  
%
%
\begin{figure}[!ht]
	\centerline{\includegraphics[width=0.78\linewidth]{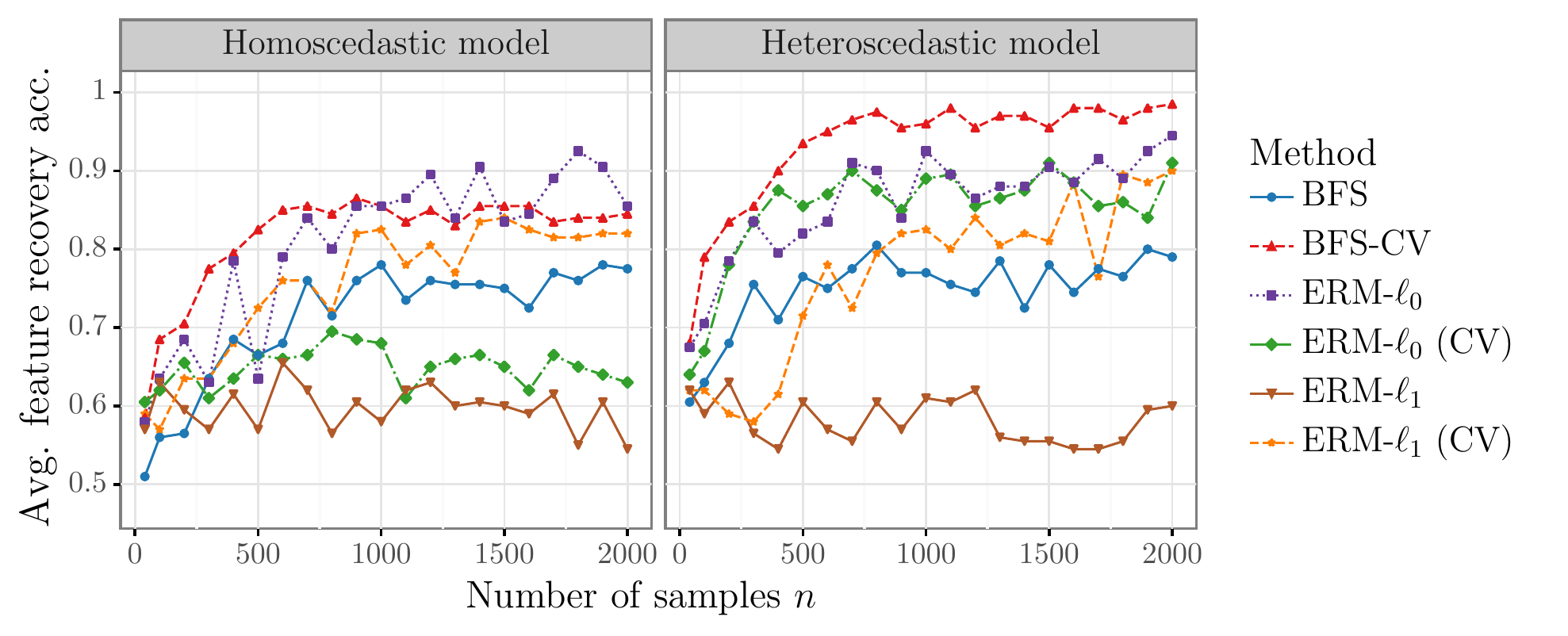}}
	\vspace{-0.2cm}
	\caption{Impact of instance size on the accuracy of feature recovery}
	\label{fig:accuracy_feature_recovery-nonlinear}
\end{figure}


\vspace*{0.5cm}

\noindent\textbf{Number of features.} Figure~\ref{fig:accuracy_feature_recovery-features-nonlinear} shows the average feature recovery accuracy for varying $m \in \{8,10,12,14\}$, for both homoscedastic and heteroscedastic demand. Similarly as for instances with linear demand, the number of features does not strongly influence the accuracy performance for nonlinear instances. 

\begin{figure}[!ht]
	\centerline{\includegraphics[width=0.78\linewidth]{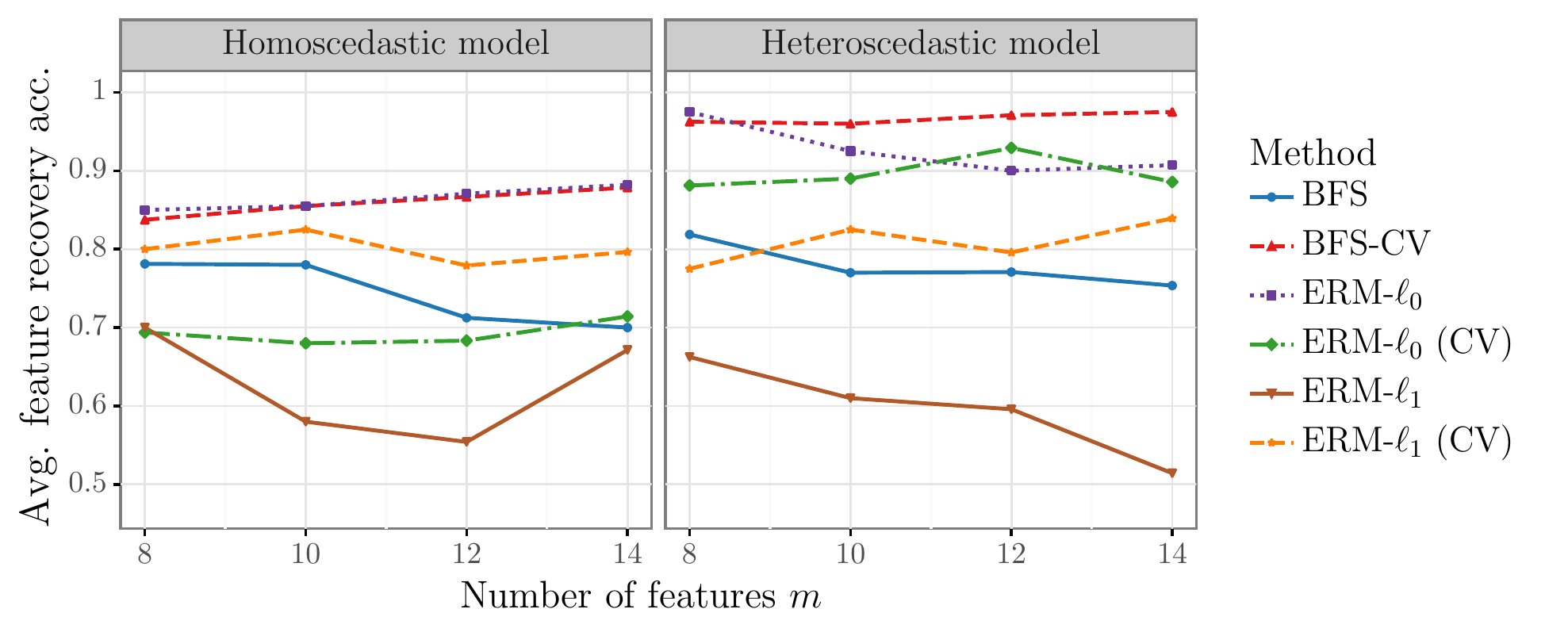}}
	\vspace{-0.2cm}
	\caption{Impact of number of features $m$ on the accuracy of feature recovery}
	\label{fig:accuracy_feature_recovery-features-nonlinear}
\end{figure}

\newpage

\noindent\textbf{Noise level.} For the heteroscedastic case, \gls{BFS-CV} outperforms existing methods (Figure~\ref{fig:accuracy_feature_recovery-noise-level-nonlinear}). In this setting, varying the standard deviation $\sigma_{\epsilon}$ does not strongly affect the accuracy performance. For homoscedastic instances, all methods present inferior accuracy compared to heteroscedastic instances. 
Here, \gls{ERM}-$\ell_0$ is superior to other methods for most values of $\sigma_{\epsilon}$, but~\gls{BFS-CV} often presents comparable accuracy.

\vspace*{0.5cm}

\noindent\textbf{Shortage and holding costs.} In Figure~\ref{fig:accuracy_feature_recovery-backorder_cost-nonlinear}, for heteroscedastic instances, \gls{BFS-CV} has superior accuracy than other methods when $b/(b+h)\geq 0.3$. We observe that all methods perform poorly when the newsvendor ratio is below 0.3. In particular, the considered methods often do not recover any features when $b/(b+h) \in [0.1,0.2]$, leading to accuracy values between 0.5 and 0.65. For the homoscedastic case, the accuracy of all methods decreases with increasing newsvendor ratios, with values often below 0.8. In this setting, \gls{BFS-CV} is superior to most methods, but is outperformed by \gls{ERM}-$\ell_0$ when $b/(b+h)\geq 0.8$ and by \gls{ERM}-$\ell_1$ (CV) when $b/(b+h)\in[0.2,0.35]$.

\begin{figure}[!t]
	\centerline{\includegraphics[width=0.78\linewidth]{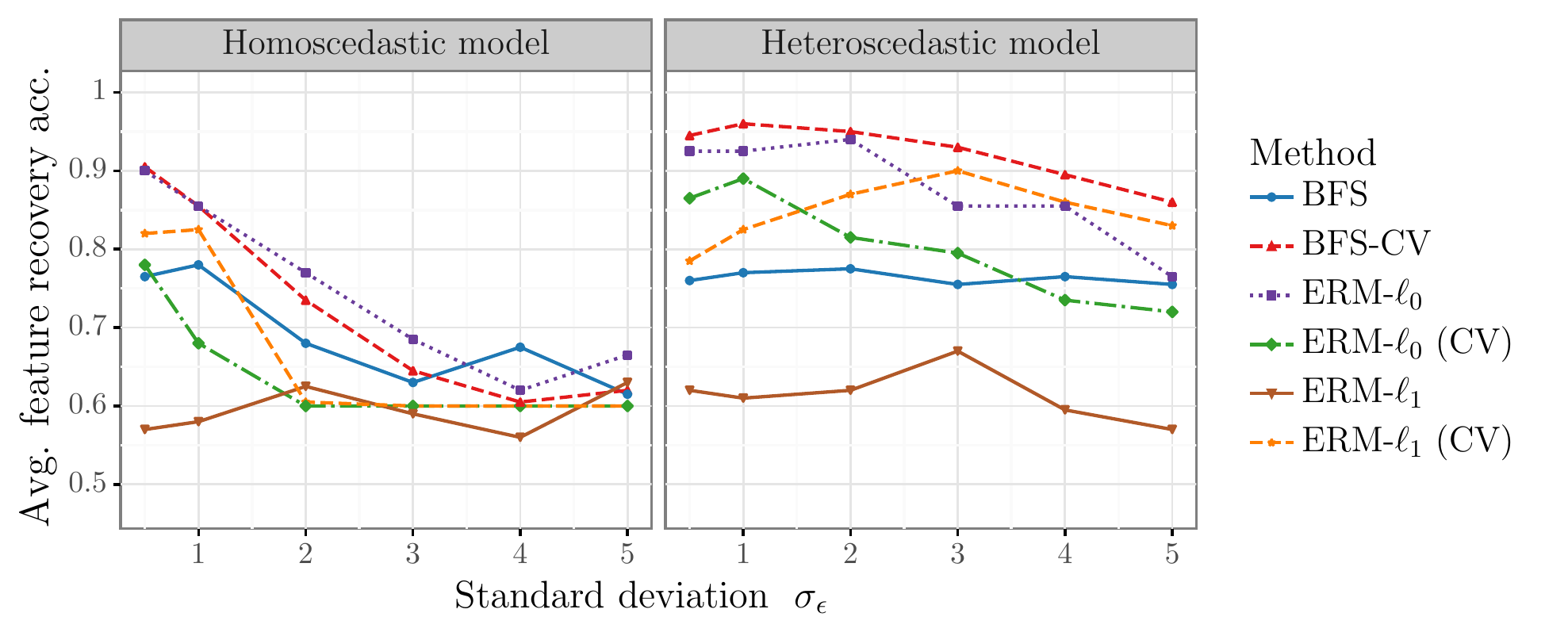}}
	\vspace{-0.2cm}
	\caption{Impact of different noise levels on the accuracy of feature recovery}
	\label{fig:accuracy_feature_recovery-noise-level-nonlinear}
\end{figure}

\begin{figure}[!t]
	\centerline{\includegraphics[width=0.82\linewidth]{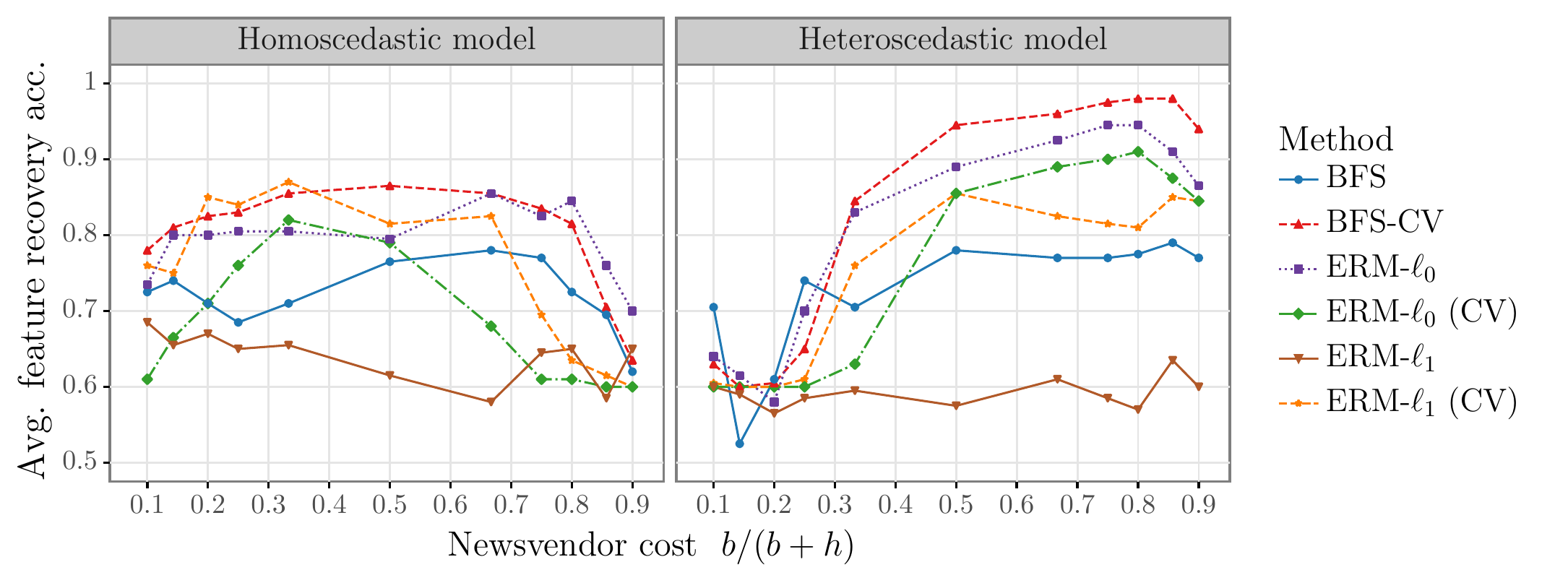}}
	\vspace{-0.2cm}
	\caption{Impact of shortage cost $b$ and holding cost $h$ on the accuracy of feature recovery}
	\label{fig:accuracy_feature_recovery-backorder_cost-nonlinear}
\end{figure}

One comment on these results is in order. Applying a linear decision function to nonlinear data may naturally lead to poor results, due to the inconsistent dependency structure with respect to the features.
In some cases, we observed that all methods fail to achieve reasonable accuracy in feature recovery. 
However, for the vast majority of settings that we considered, \gls{BFS-CV} presented good performance, outperforming the other methods. One possibility for dealing with such nonlinear instances would be to include additional features by considering nonlinear transformations of the original features, which we leave for future research.

\section{Conclusion}\label{section-conclusion}

We have presented a novel formulation based on bilevel optimization for incorporating feature selection in the feature-based newsvendor problem. The proposed \gls{BFS} models provide an intuitive approach specifically designed for the task of feature selection, in which the upper-level problem directly selects the subset of relevant features. Our experimental results on synthetic data show that the proposed methods can accurately recover ground-truth informative features, leading to more explainable inventory decisions in comparison to previous methods. 

There are many possibilities for follow-up works. First, research on tailored solution methods for \gls{BFS} and \gls{BFS-CV}, e.g., based on decomposition strategies for mixed integer programming, may allow to improve the scalability of the proposed methods when dealing with a large number of features. Next, other classes of data-driven optimization problems may benefit from an extension of the proposed methodology. As an example, the newsvendor problem also captures the fundamental trade-offs emerging in decisions related to capacity planning. Accordingly, an extension of the proposed \gls{BFS} models may be applied to select features in stochastic, data-driven variations of such problems. Finally, from a broader perspective, incorporating concepts from machine learning into data-driven optimization problems may lead to crucial advances for both fields. As exemplified in this work, feature selection is one among possibly many machine learning tasks that can be integrated into the decision-making process, especially given the growing need for more explainable decisions. 

\vspace*{0.5cm}
\noindent\textbf{Acknowledgments} 

This research has been funded by the Deutsche Forschungsgemeinschaft (DFG, German Research Foundation) - Projektnummer 277991500. This support is gratefully acknowledged.

\appendix

\section{Computation times}\label{appendix-computation-times}

In this section, we report the computation times of the different methods considered in the experiments.
Figures~\ref{fig:solution_time-features=8}, \ref{fig:solution_time-features=10}, \ref{fig:solution_time-features=12}, and \ref{fig:solution_time-features=14} show how the solution times scale with the number of samples $n$, respectively for $m\in\{8,10,12,14\}$ features.
Here, we consider instances with linear demand model and we fix shortage cost to $b=2$, the holding cost to $h=1$, and the noise level to $\sigma_{\epsilon}=1$.



\begin{figure}[H]
	\centerline{\includegraphics[width=0.9\linewidth]{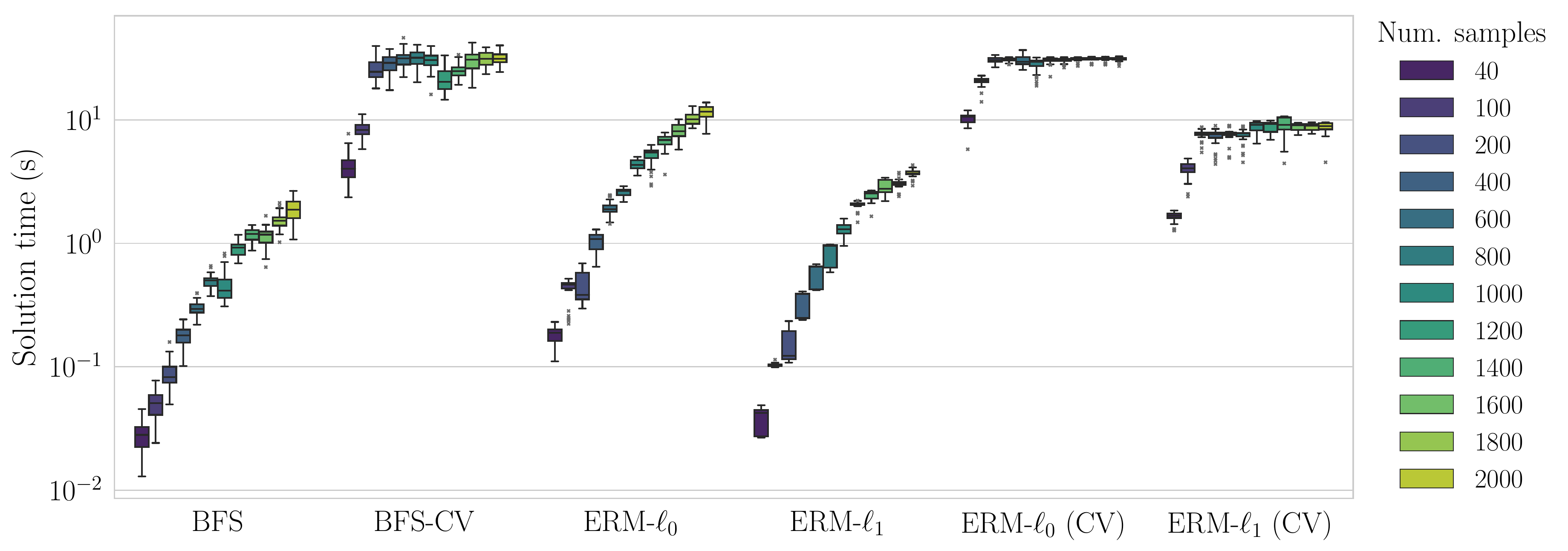}}
	\caption{Solution times (in seconds) for instances with $m=8$ features}
	\label{fig:solution_time-features=8}
\end{figure}

\begin{figure}[H]
	\centerline{\includegraphics[width=0.9\linewidth]{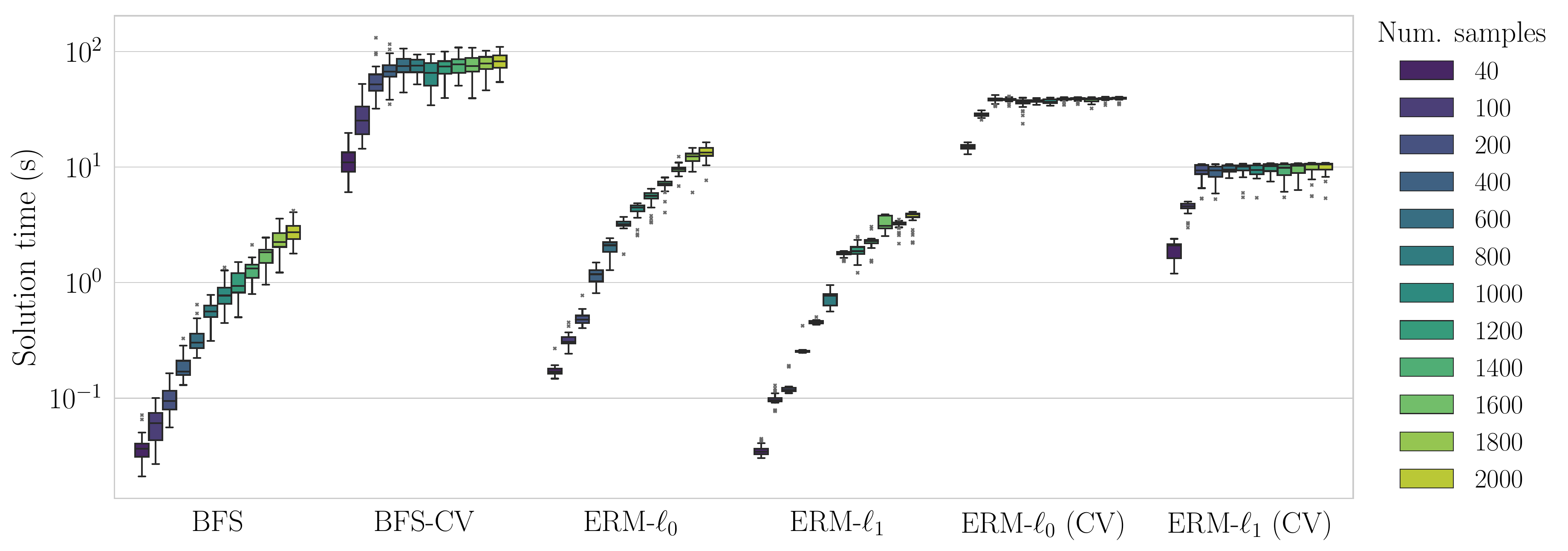}}
	\caption{Solution times (in seconds) for instances with $m=10$ features}
	\label{fig:solution_time-features=10}
\end{figure}

\begin{figure}[H]
	\centerline{\includegraphics[width=0.9\linewidth]{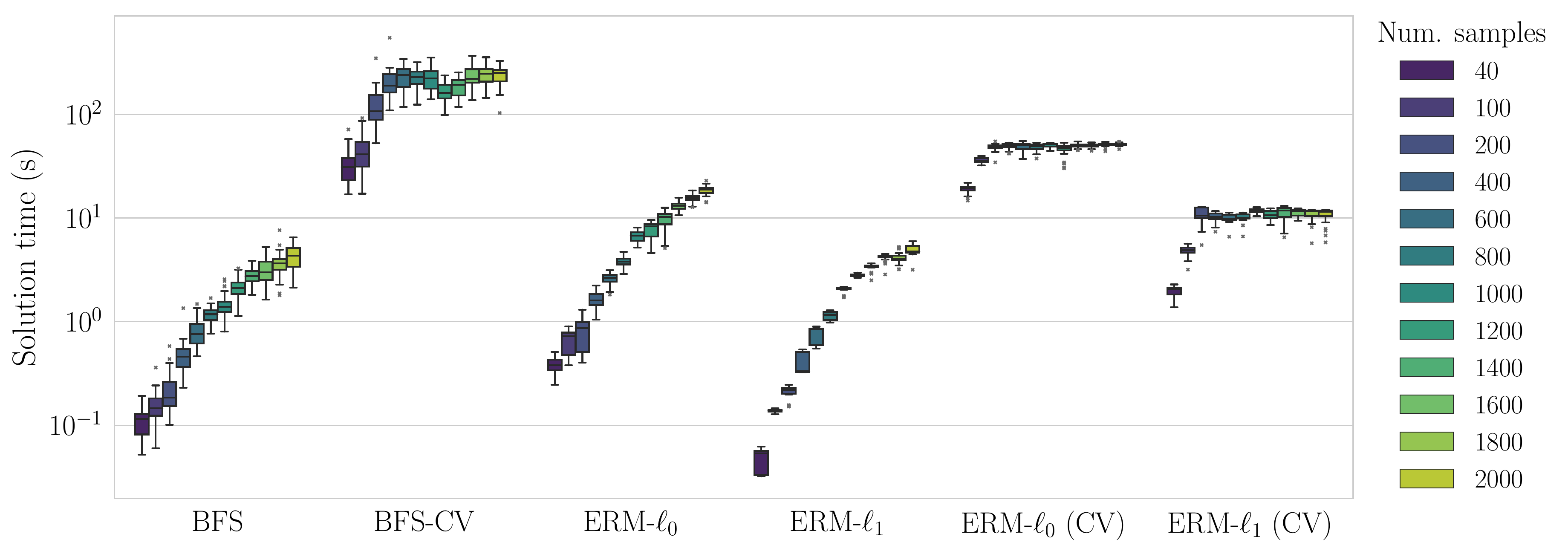}}
	\caption{Solution times (in seconds) for instances with $m=12$ features}
	\label{fig:solution_time-features=12}
\end{figure}
\begin{figure}[H]
	\centerline{\includegraphics[width=0.9\linewidth]{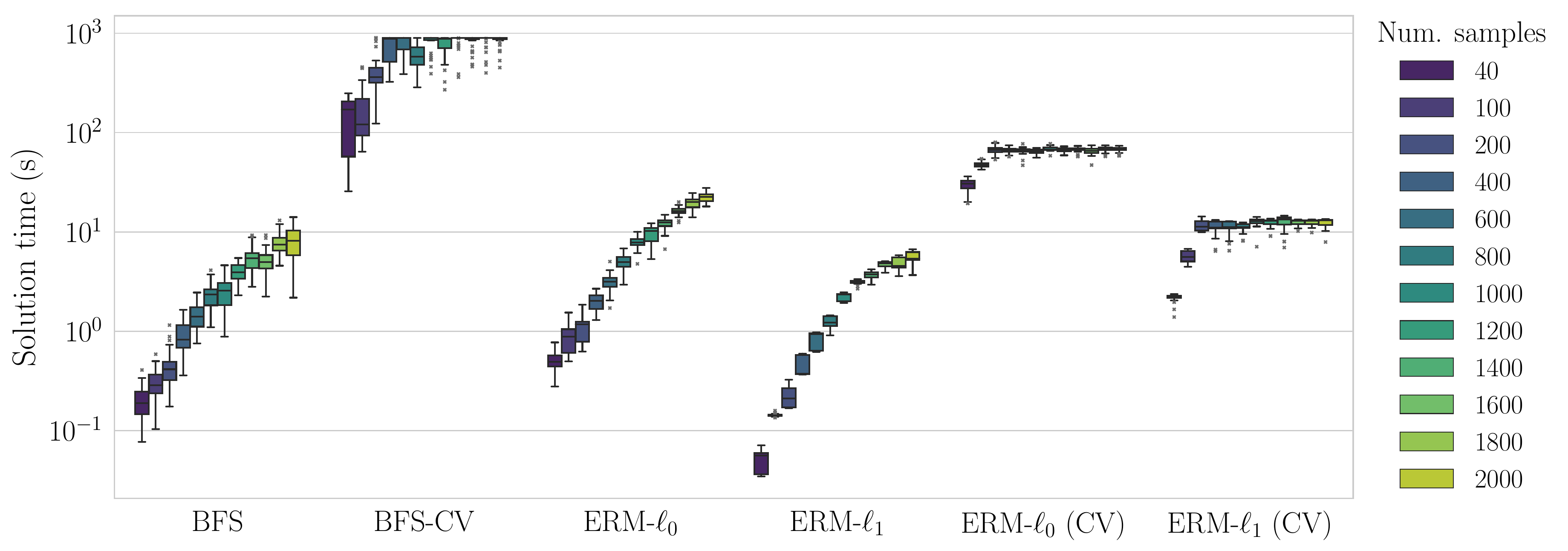}}
	\caption{Solution times (in seconds) for instances with $m=14$ features}
	\label{fig:solution_time-features=14}
\end{figure}

Considering instances with a larger number of features ($m=14$), \gls{BFS-CV} cannot solve all instances to optimality within the specified time limit. 
Accordingly, Figure~\ref{fig:mip_gap-features=14} illustrates the distribution of the optimality gaps (in percentage values) for instances with $m=14$ features. In this case, optimality gaps are often below $5\%$.


\begin{figure}[H]
	\centerline{\includegraphics[width=0.7\linewidth]{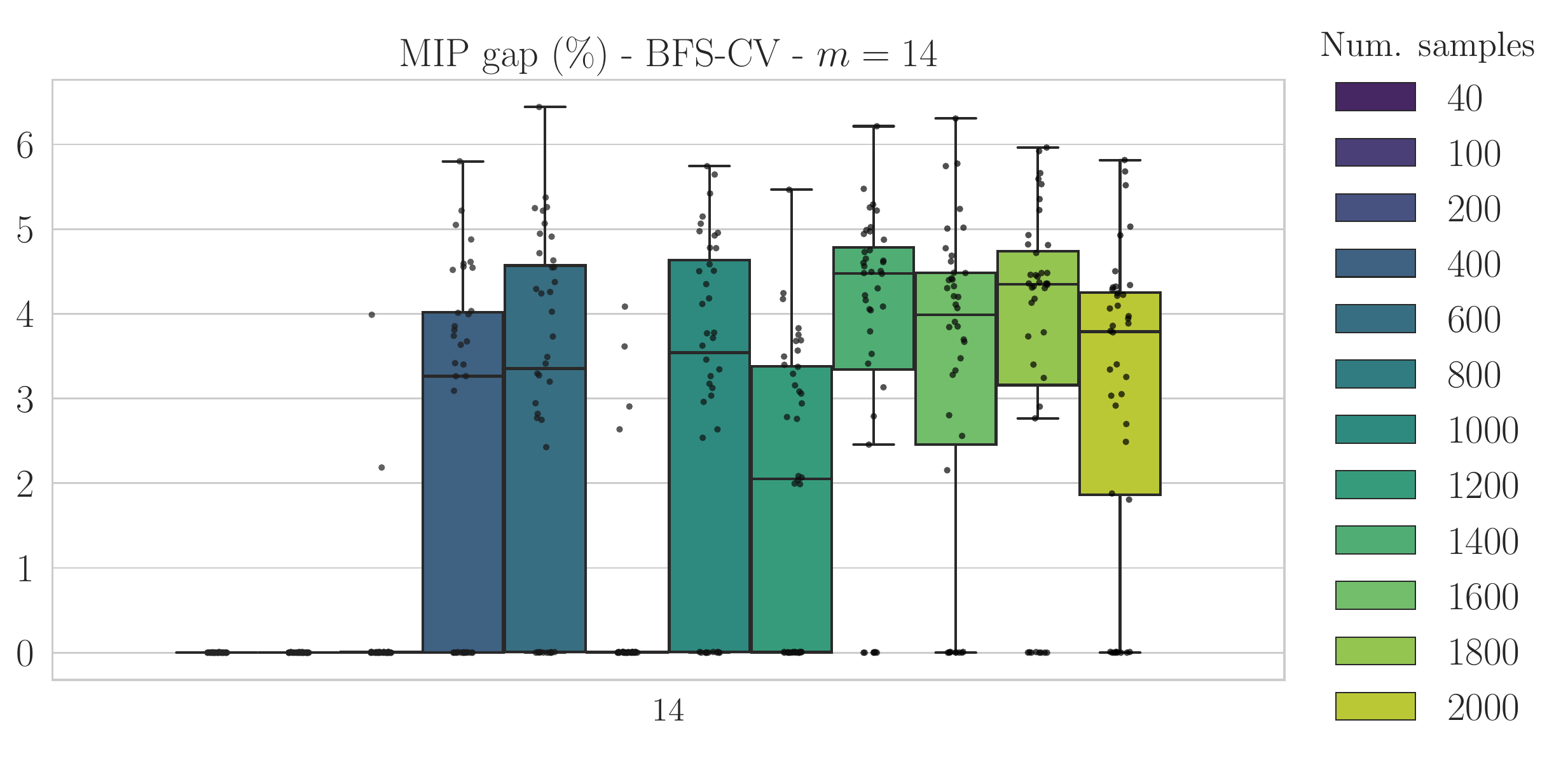}}
	\caption{MIP gaps for $m=14$}
	\label{fig:mip_gap-features=14}
\end{figure}

\section{Complementary results} 
\label{appendix-linear}

In the following, we report the p-values of the non-parametric one-sided Wilcoxon signed-rank test, with null hypothesis that the median difference in accuracy between the BFS-CV and other methods is negative.
Tables~\ref{table:pvalues} and~\ref{table:pvalues-nonlinear} report p-values for instances with linear and nonlinear demand, respectively.

\begin{table}[H]
	\centering
	\setlength{\tabcolsep}{8pt}
	\hspace*{0.0cm}
	{\fontsize{9}{18}\selectfont\begin{tabular}{lccccc}
			\hline
			Figure & BFS &  ERM-$\ell_0$ &  ERM-$\ell_0$ (CV) &  ERM-$\ell_1$ &  ERM-$\ell_1$ (CV) \\
			\hline
			\hline
			Figure~\ref{fig:accuracy_feature_recovery} & 2.31E-35 &  8.73E-12 & 6.88E-23 &      2.25E-38 & 1.23E-32 \\
			\hline
			Figure~\ref{fig:accuracy_feature_recovery-features} & 2.53E-14 & 7.76E-04 & 4.14E-10 & 2.32E-14 & 1.95E-13 \\
			\hline
			Figure~\ref{fig:accuracy_feature_recovery-noise-level} & 3.45E-07 & \textbf{8.89E-02} & 1.08E-11 & 1.06E-15 & 4.47E-13 \\
			\hline
			Figure~\ref{fig:accuracy_feature_recovery-backorder_cost} & 2.00E-32 & 6.56E-08 & 3.96E-23 & 1.77E-37 & 5.02E-27 \\
			\hline
		\end{tabular}
	}
	\caption{P-values for linear instances.
	}
	\label{table:pvalues}
\end{table}

\begin{table}[H]
	\centering
	\setlength{\tabcolsep}{8pt}
	\hspace*{0.0cm}
	{\fontsize{9}{18}\selectfont\begin{tabular}{llccccc}
			\hline
			Figure & Noise & BFS &  ERM-$\ell_0$ &  ERM-$\ell_0$ (CV) &  ERM-$\ell_1$ &  ERM-$\ell_1$ (CV) \\
			\hline
			\hline
			\multirow{2}{*}{Figure~\ref{fig:accuracy_feature_recovery-nonlinear}} & Homoscedastic & 8.20E-33 & \textbf{3.15E-01} & 1.77E-53 & 4.49E-55 & 2.81E-19\\
			\cline{2-7}
			& Heteroscedastic & 8.92E-56 & 1.92E-12 & 1.02E-29 & 7.49E-65 & 9.00E-49 \\
			\hline
			\multirow{2}{*}{Figure~\ref{fig:accuracy_feature_recovery-features-nonlinear}} & Homoscedastic & 4.45E-08 & \textbf{7.87E-01} & 2.66E-13 & 2.01E-12 & 2.37E-05 \\
			\cline{2-7}
			& Heteroscedastic & 6.90E-13 & 3.32E-02 & 8.34E-08 & 9.35E-15 & 2.25E-11 \\
			\hline
			\multirow{2}{*}{Figure~\ref{fig:accuracy_feature_recovery-noise-level-nonlinear}} & Homoscedastic & 2.47E-02 & \textbf{9.86E-01} & 5.60E-11 & 7.19E-10 & 6.27E-07 \\
			\cline{2-7}
			& Heteroscedastic & 3.05E-16 & 1.72E-03 & 9.18E-15 & 5.51E-20 & 6.52E-08 \\
			\hline
			\multirow{2}{*}{Figure~\ref{fig:accuracy_feature_recovery-backorder_cost-nonlinear}} & Homoscedastic & 2.58E-13 & \textbf{3.98E-01} & 6.28E-25 & 2.51E-22 & 2.21E-09 \\
			\cline{2-7}
			& Heteroscedastic & 4.86E-13 & \textbf{6.03E-02} & 4.44E-17 & 1.72E-26 & 1.59E-16 \\
			\hline
		\end{tabular}
	}
	\caption{P-values for nonlinear instances.
	}
	\label{table:pvalues-nonlinear}
\end{table}

\section{Test cost results for nonlinear instances}
\label{appendix-nonlinear}

We present results regarding test cost performance for instances with nonlinear demand model. This section follows the same structure as Section~\ref{section-experiments-nonlinear} of the main paper.

\subsection{Instance size}
\label{appendix-instance-size-nonlinear}

Regarding the impact of instance size $n$, Figures~\ref{fig:relative_test_cost-nb_samples-heteroscedastic-gaussian} and \ref{fig:relative_test_cost-nb_samples-homoscedastic_demand-gaussian_noise} show test cost results for instances with heteroscedastic and homoscedastic demand, respectively.

\begin{figure}[H]
	\hspace{-0.6cm}
	\begin{tabular}{ccc}
		\includegraphics[width=0.36\linewidth]{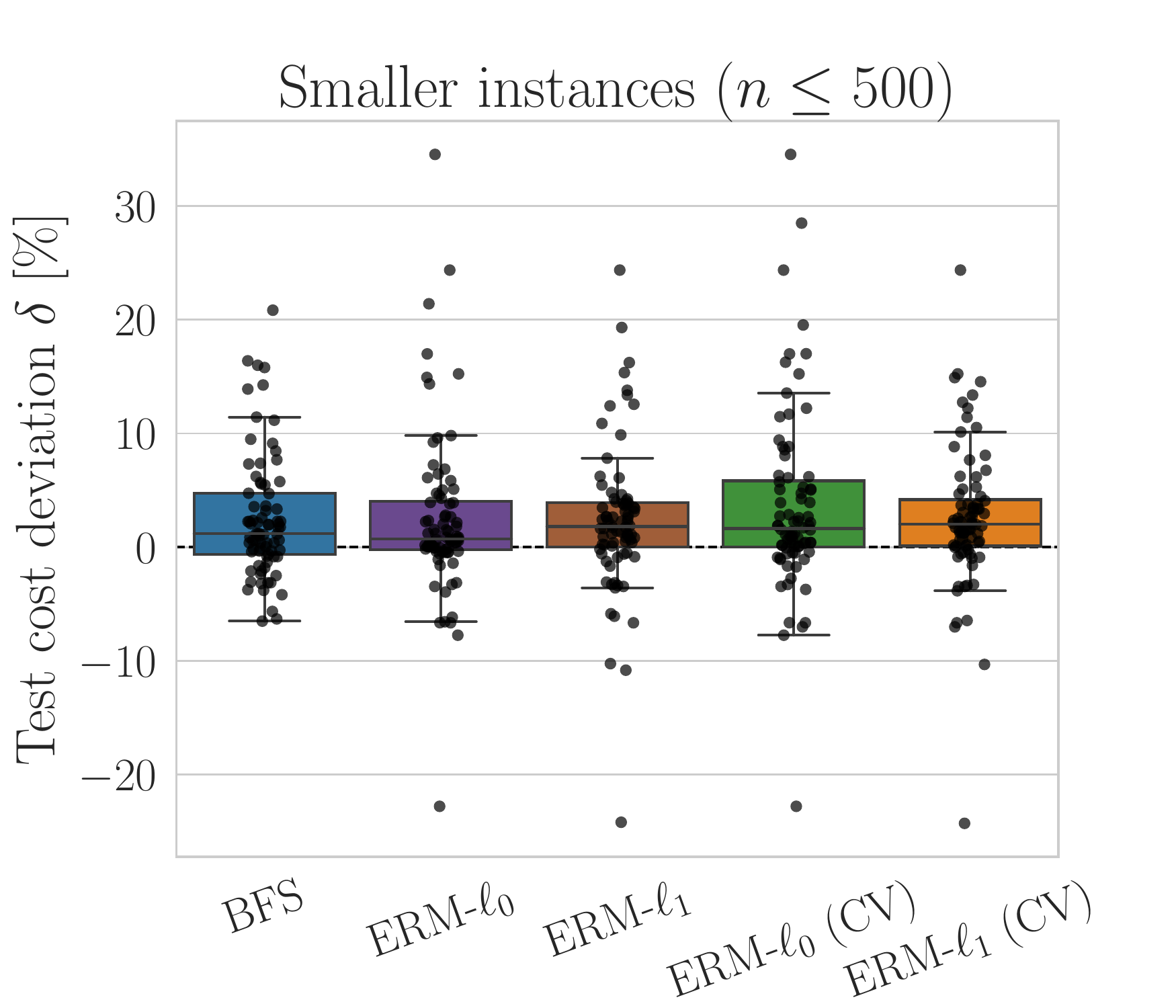} & \hspace{-0.7cm}\includegraphics[width=0.36\linewidth]{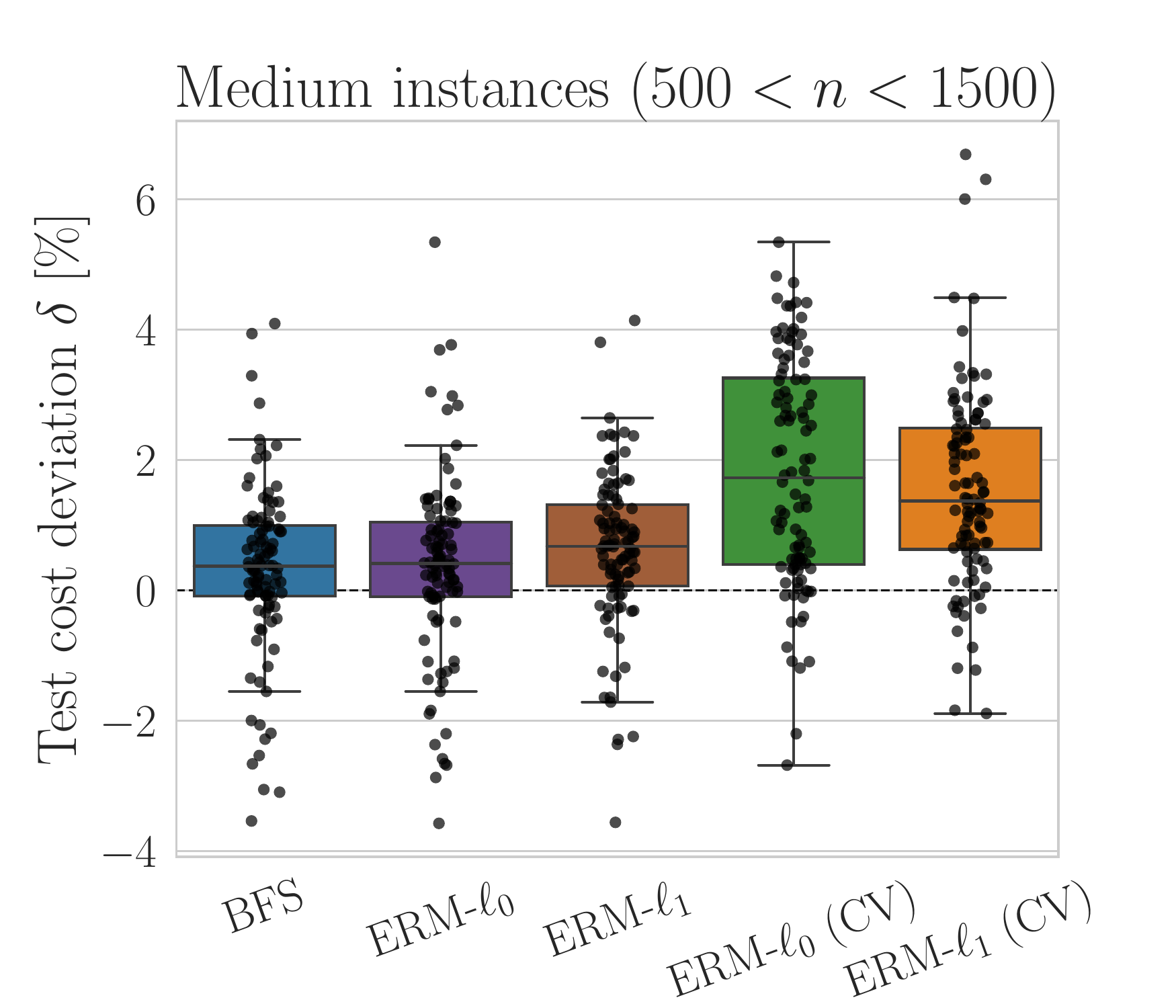} & \hspace{-0.7cm}\includegraphics[width=0.36\linewidth]{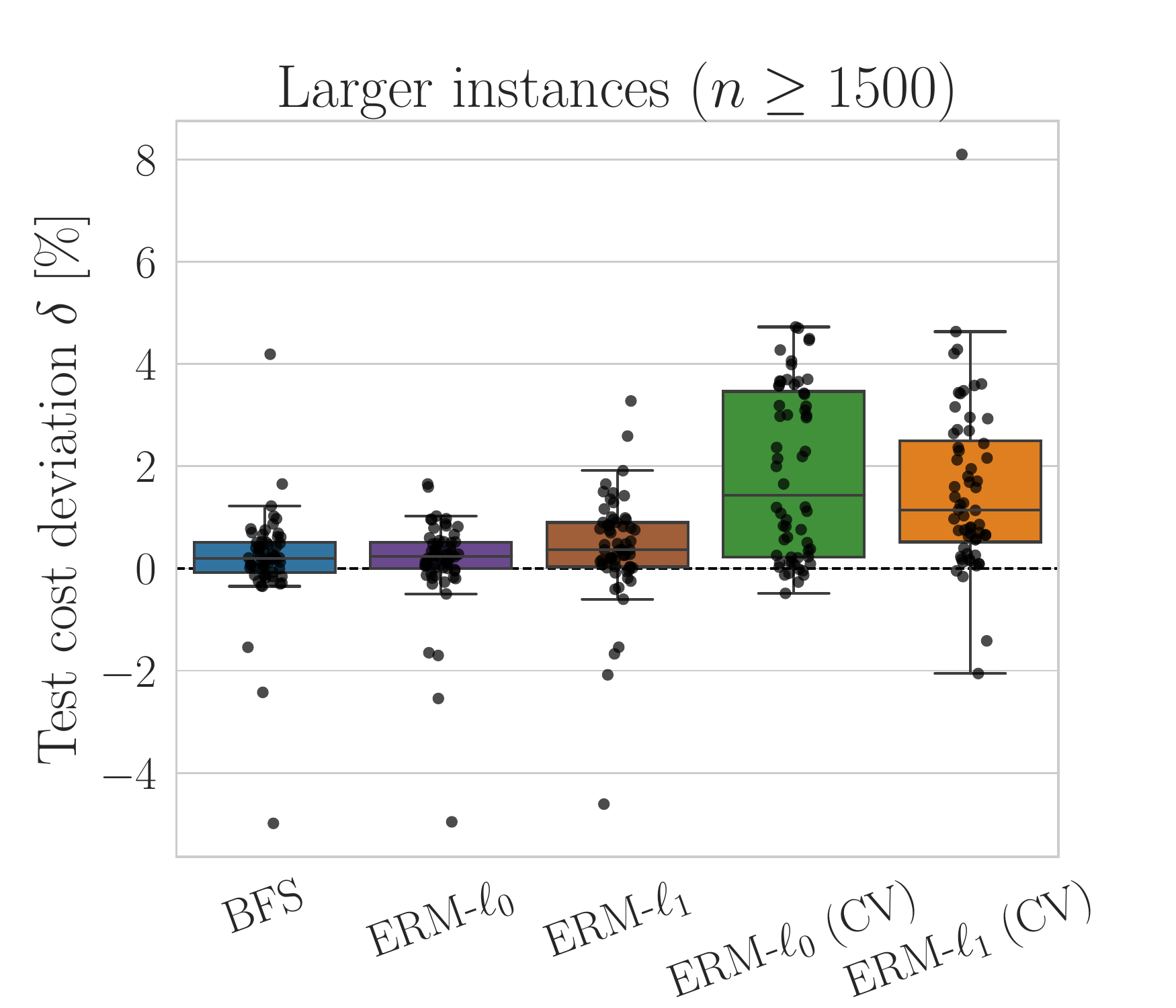} 
	\end{tabular}
	\caption{Impact of instance size on the test cost performance (heteroscedastic demand with Gaussian noise)}
	\label{fig:relative_test_cost-nb_samples-heteroscedastic-gaussian}
\end{figure}

\begin{figure}[H]
	\hspace{-0.6cm}
	\begin{tabular}{ccc}
		\includegraphics[width=0.36\linewidth]{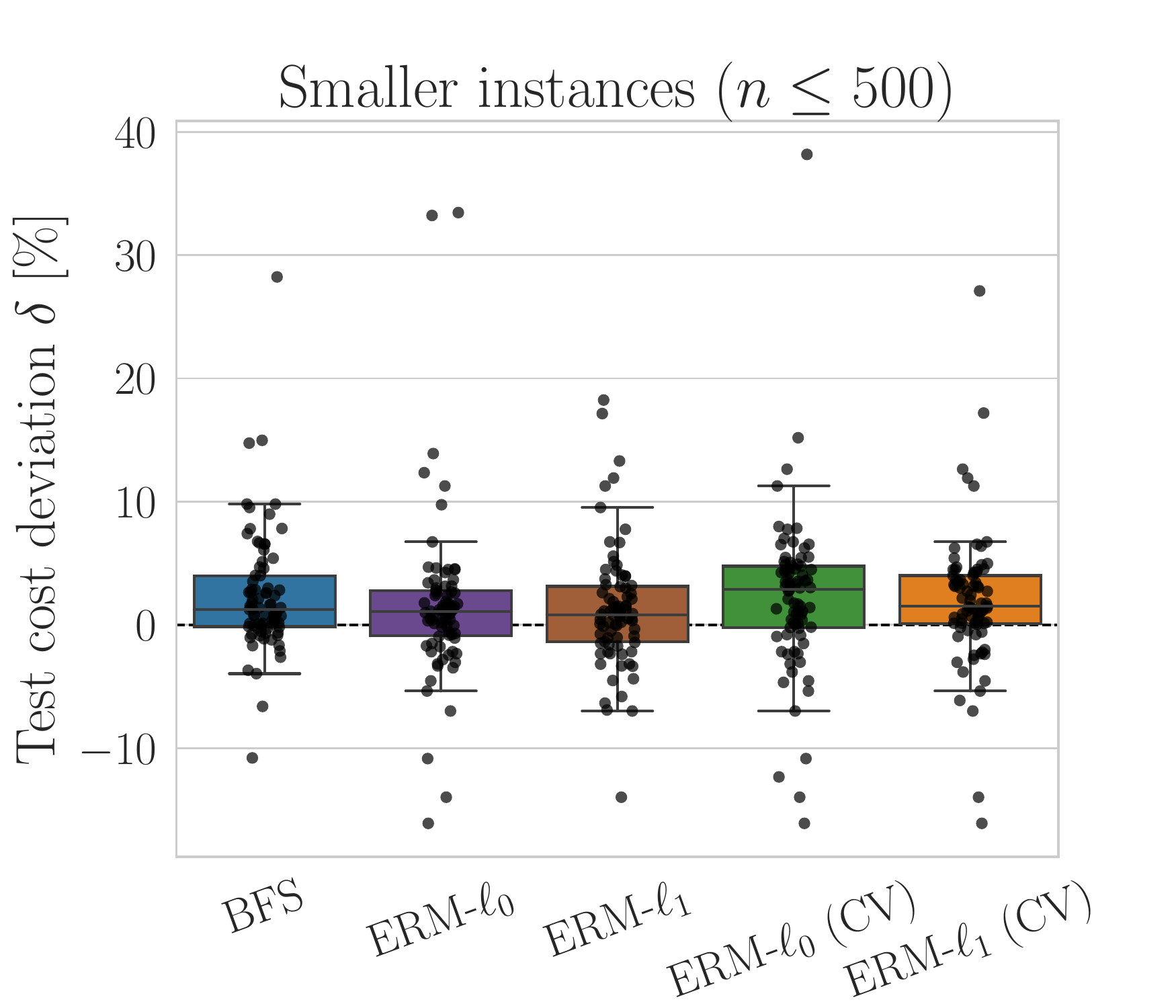} & \hspace{-0.7cm}\includegraphics[width=0.36\linewidth]{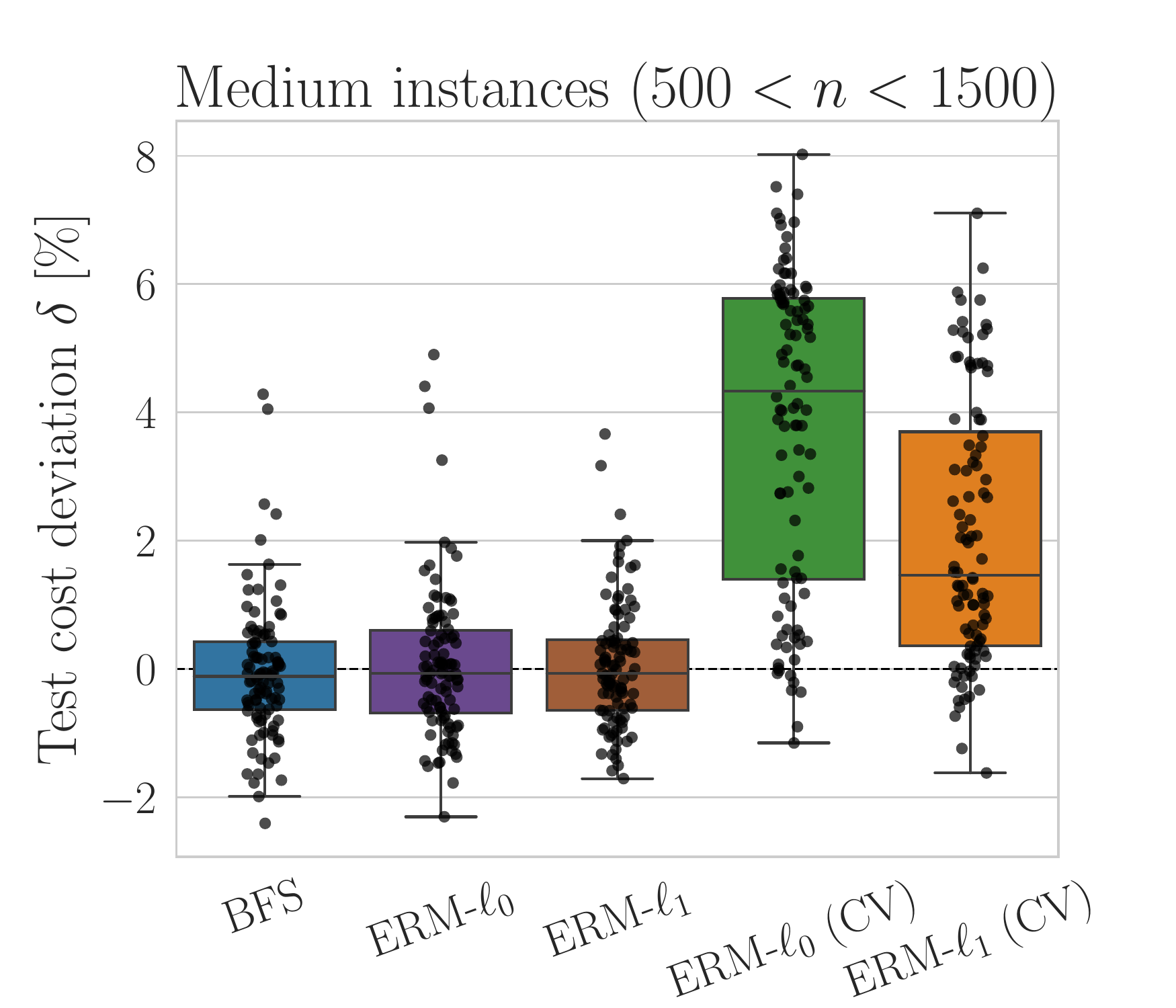} & \hspace{-0.7cm}\includegraphics[width=0.36\linewidth]{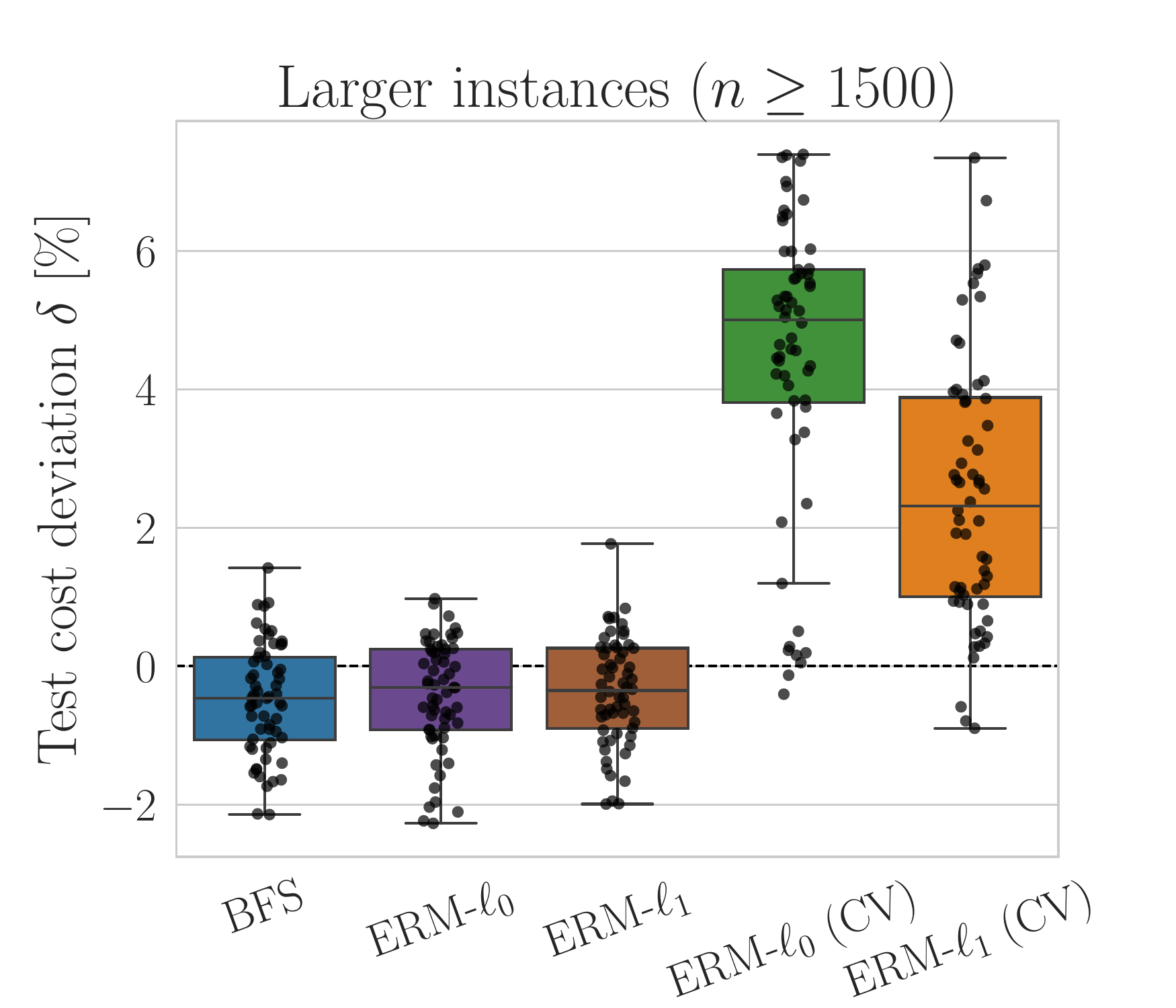} 
	\end{tabular}
	\caption{Impact of instance size on test cost performance (homoscedastic demand with Gaussian noise)}
	\label{fig:relative_test_cost-nb_samples-homoscedastic_demand-gaussian_noise}
\end{figure}

\subsection{Number of features}
\label{appendix-features-nonlinear}

Figures~\ref{fig:relative_test_cost-features-nonlinear} and \ref{fig:relative_test_cost-features-homoscedastic_demand-gaussian_noise} show the impact of the number of features $m$ on test cost results for large instances ($n\geq1500$) with heteroscedastic and homoscedastic demand, respectively.

\begin{figure}[H]
	\centerline{\includegraphics[width=0.8\linewidth]{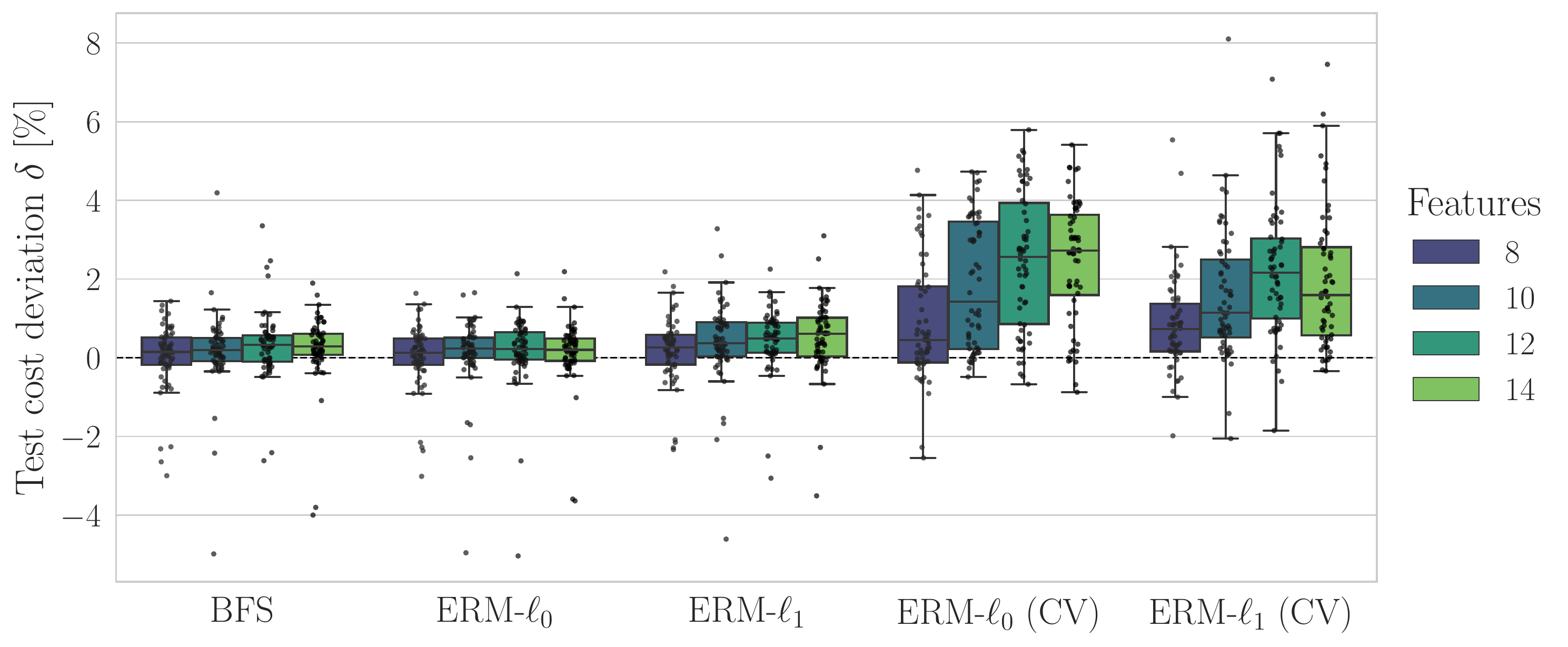}}
	\caption{Impact of number of features $m$ on the test cost (heteroscedastic demand with Gaussian noise)}
	\label{fig:relative_test_cost-features-nonlinear}
\end{figure}

\begin{figure}[H]
	\centerline{\includegraphics[width=0.8\linewidth]{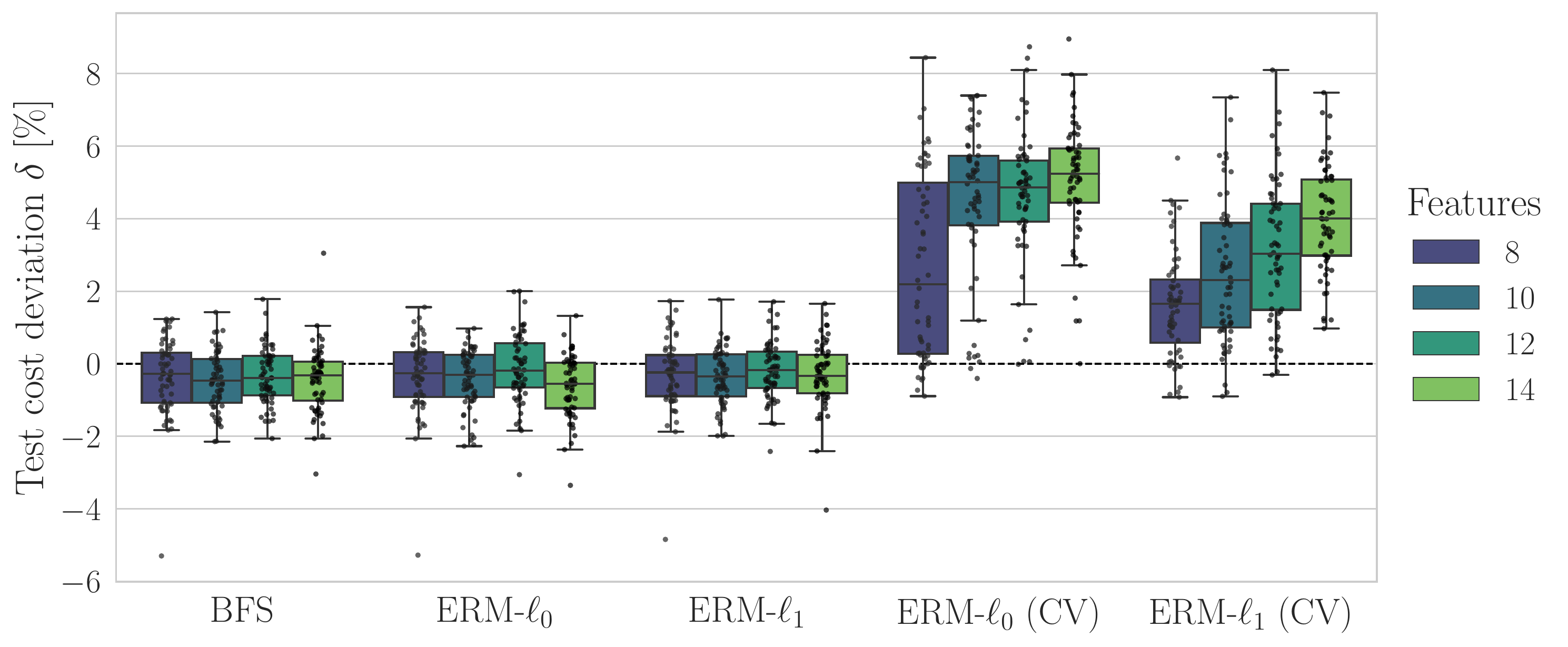}}
	\caption{Impact of number of features $m$ on test cost (homoscedastic demand with Gaussian noise)}
	\label{fig:relative_test_cost-features-homoscedastic_demand-gaussian_noise}
\end{figure}

\subsection{Noise level}
\label{appendix-noise-level-nonlinear}

Figures~\ref{fig:relative_test_cost-noise-nonlinear} and \ref{fig:relative_test_cost-noise_level-homoscedastic_demand-gaussian_noise} show the impact of noise level $\sigma_{\epsilon}$ on test cost results for large instances ($n\geq1500$) with heteroscedastic and homoscedastic demand, respectively.

\begin{figure}[H]
	\centerline{\includegraphics[width=0.8\linewidth]{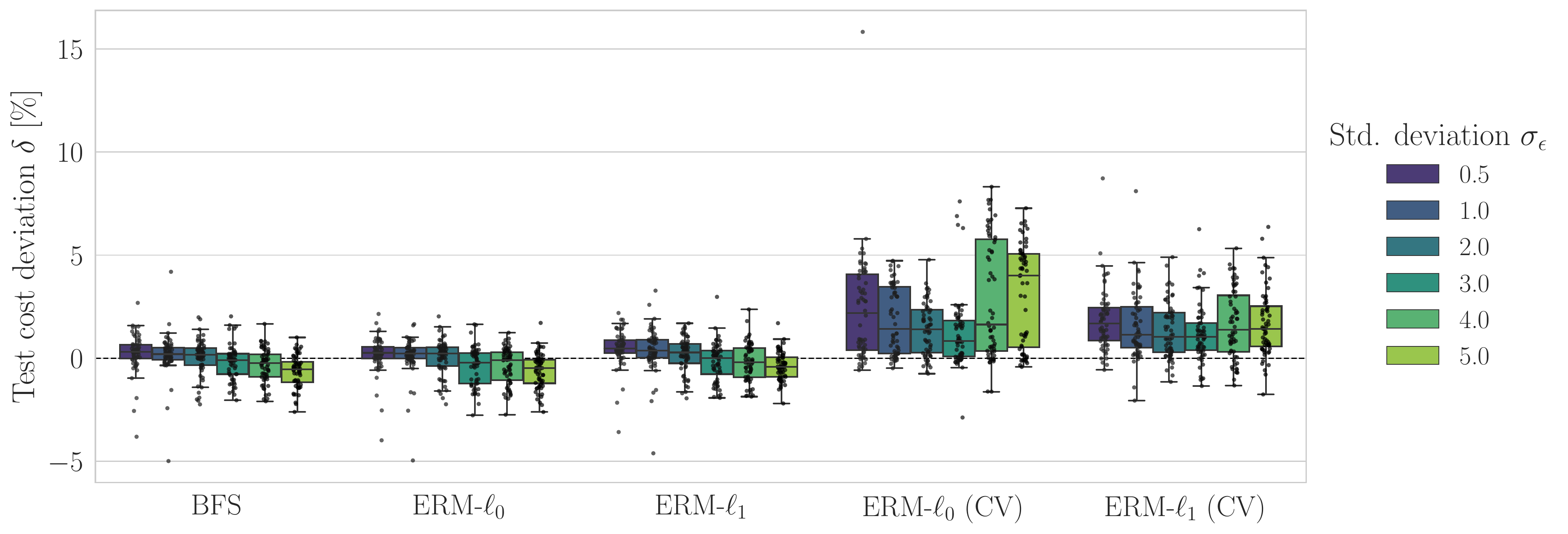}}
	\caption{Impact of noise level $\sigma_{\epsilon}$ on test cost (heteroscedastic demand with Gaussian noise)}
	\label{fig:relative_test_cost-noise-nonlinear}
\end{figure}

\begin{figure}[H]
	\centerline{\includegraphics[width=0.8\linewidth]{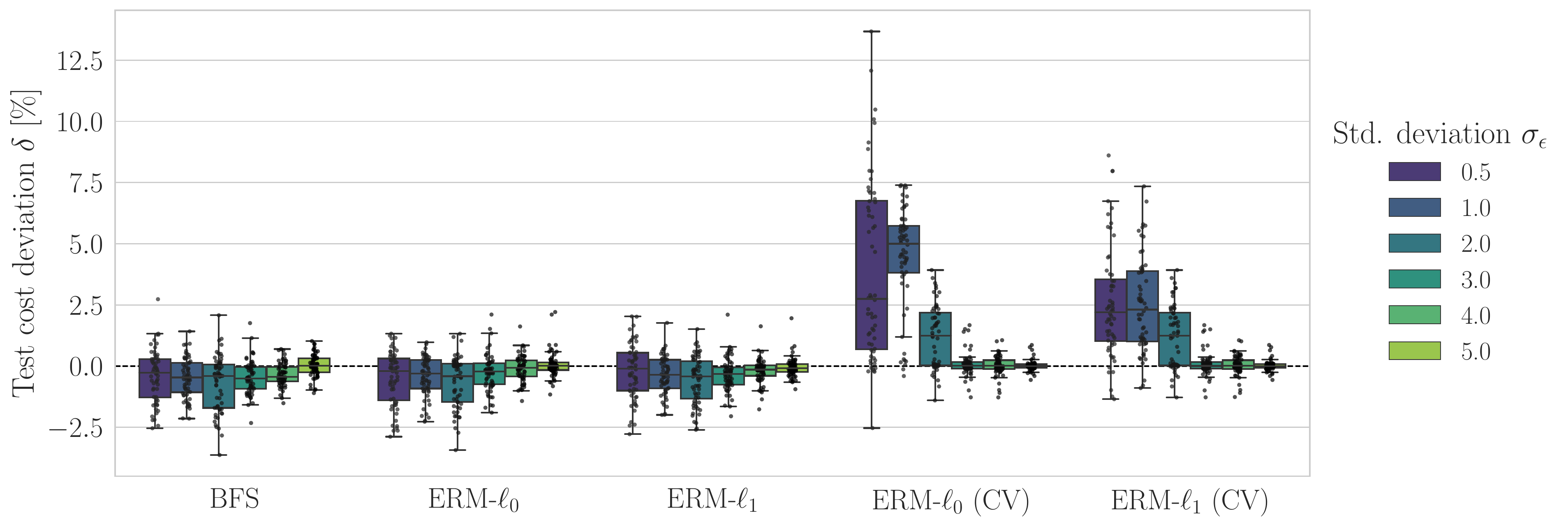}}
	\caption{Impact of noise level $\sigma_{\epsilon}$ on test cost (homoscedastic demand with Gaussian noise)}
	\label{fig:relative_test_cost-noise_level-homoscedastic_demand-gaussian_noise}
\end{figure}

\subsection{Shortage cost}
\label{appendix-backorder-cost-nonlinear}

Figures~\ref{fig:relative_test_cost-backorder_cost-heteroscedastic_demand-gaussian_noise} and \ref{fig:relative_test_cost-backorder_cost-homoscedastic_demand-gaussian_noise} show the impact of shortage cost $b$ on test cost results for large instances ($n\geq1500$) with heteroscedastic and homoscedastic demand, respectively.

\begin{figure}[H]
	\centerline{\includegraphics[width=0.8\linewidth]{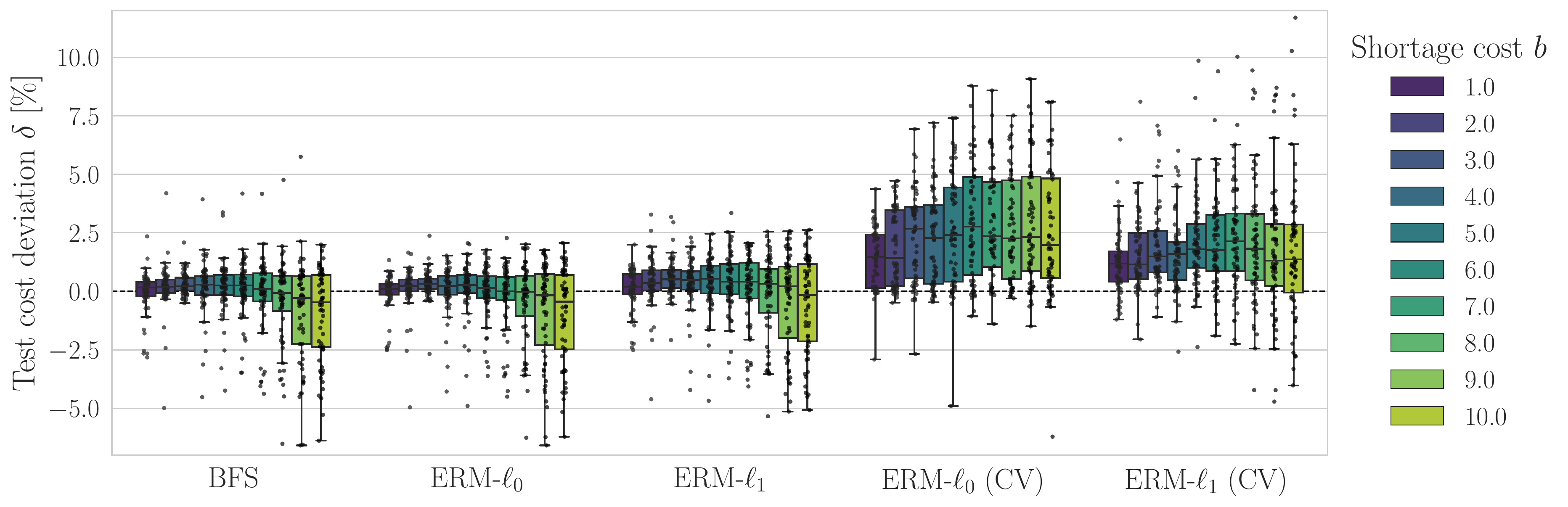}}
	\caption{Impact of shortage cost $b$ on the test cost (heteroscedastic demand with Gaussian noise)}
	\label{fig:relative_test_cost-backorder_cost-heteroscedastic_demand-gaussian_noise}
\end{figure}

\begin{figure}[H]
	\centerline{\includegraphics[width=0.8\linewidth]{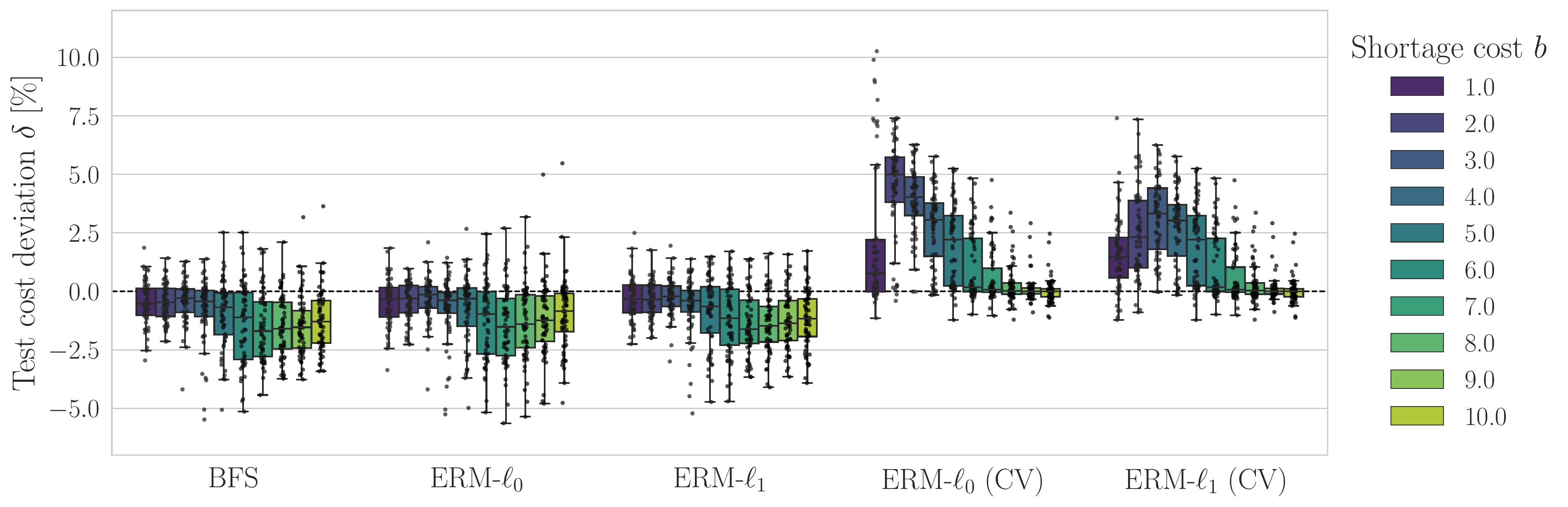}}
	\caption{Impact of shortage cost $b$ on test cost (homoscedastic demand with Gaussian noise)}
	\label{fig:relative_test_cost-backorder_cost-homoscedastic_demand-gaussian_noise}
\end{figure}

\subsection{Holding cost}
\label{appendix-holding-cost-nonlinear}

Figures~\ref{fig:relative_test_cost-holding_cost-heteroscedastic_demand-gaussian_noise} and \ref{fig:relative_test_cost-holding_cost-homoscedastic_demand-gaussian_noise} show the impact of holding cost $h$ on test cost results for large instances ($n\geq1500$) with heteroscedastic and homoscedastic demand, respectively.

\begin{figure}[H]
	\centerline{\includegraphics[width=0.9\linewidth]{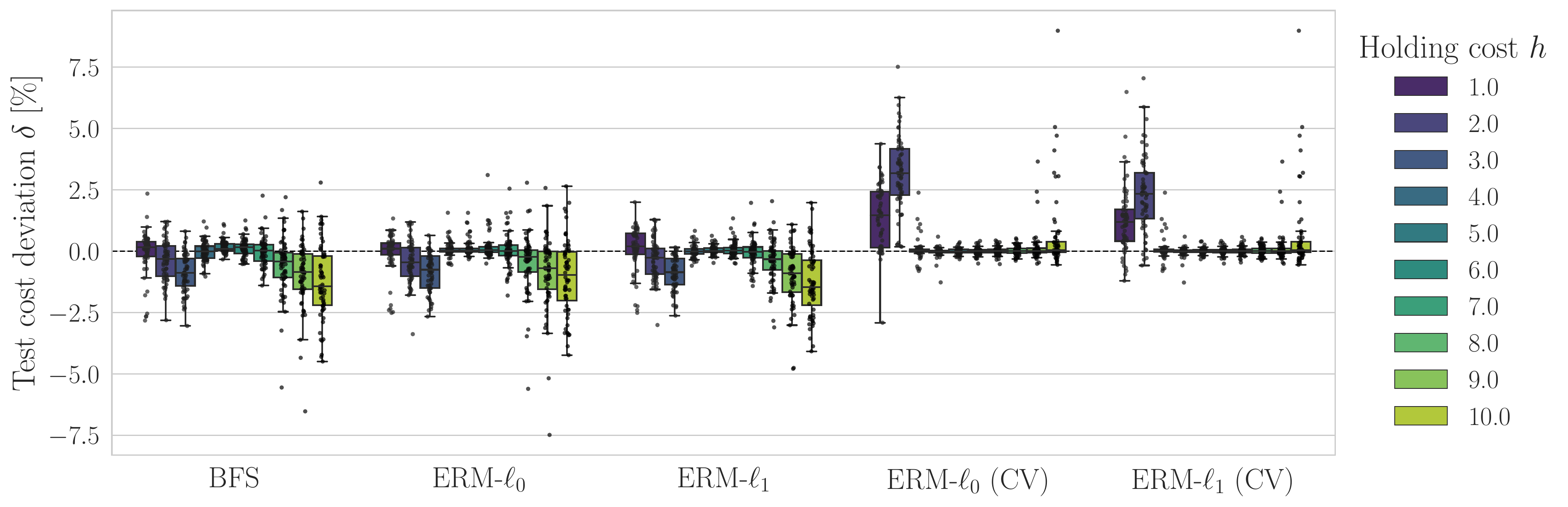}}
	\caption{Impact of holding cost $h$ on test cost (heteroscedastic demand with Gaussian noise)}
	\label{fig:relative_test_cost-holding_cost-heteroscedastic_demand-gaussian_noise}
\end{figure}

\begin{figure}[H]
	\centerline{\includegraphics[width=0.9\linewidth]{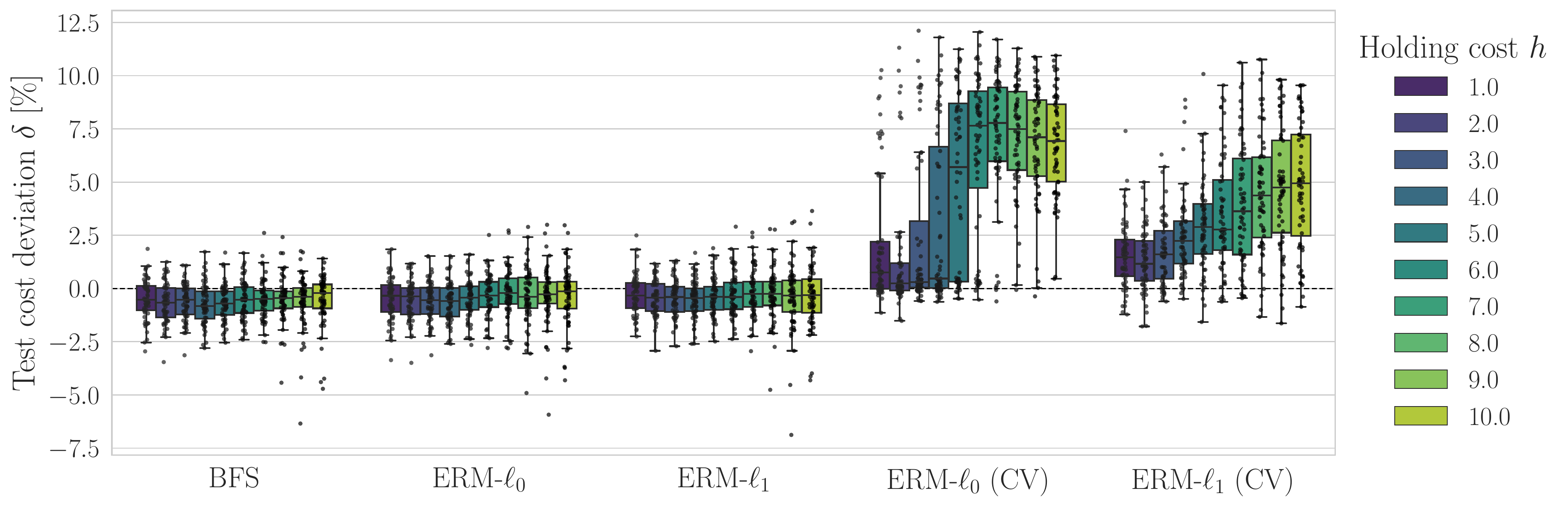}}
	\caption{Impact of holding cost $h$ on test cost (homoscedastic demand with Gaussian noise)}
	\label{fig:relative_test_cost-holding_cost-homoscedastic_demand-gaussian_noise}
\end{figure}

\end{document}